\providecommand{\tabularnewline}{\\}
\theoremstyle{plain}
\newtheorem{thm}{\protect\theoremname}
\theoremstyle{plain}
\newtheorem{prop}[thm]{\protect\propositionname}
\providecommand{\propositionname}{Proposition}
\providecommand{\theoremname}{Theorem}
\renewcommand\cite{\citep}
\begin{document}
\title{Multi-Reference Preference Optimization for Large Language Models}

\author{
\textbf{Hung Le}$^{1}$ \quad \textbf{Quan Tran}$^{2}$ \quad \textbf{Dung Nguyen}$^1$ \quad \textbf{Kien Do}$^1$ \quad \textbf{Saloni Mittal}$^2$ \\ \quad \textbf{Kelechi Ogueji}$^2$   \quad \textbf{Svetha Venkatesh}}

\affil[1]{Applied AI Institute, Deakin University, Geelong, Australia} 
\affil[1]{\texttt {\{thai.le,dung.nguyen,k.do,svetha.venkatesh\}@deakin.edu.au}}
\affil[2]{ServiceNow Research, USA}
\affil[2]{\texttt{\{hungquan.tran,saloni.mittal,kelechi.ogueji\}@servicenow.com}}
\renewcommand\Authands{ \textbf{and} } 
\maketitle
\begin{abstract}
How can Large Language Models (LLMs) be aligned with human intentions
and values? A typical solution is to gather human preference on model outputs and finetune the LLMs accordingly while
ensuring that updates do not deviate too far from a
reference model. Recent approaches, such as direct preference optimization
(DPO), have eliminated the need for unstable and sluggish reinforcement
learning optimization by introducing close-formed supervised losses.
However, a significant limitation of the current approach is its design
for \emph{a single reference model only, neglecting to leverage the
collective power of numerous pretrained LLMs}. To overcome this limitation,
we introduce a novel closed-form formulation for \emph{direct preference
optimization using multiple reference models}. The resulting
algorithm, Multi-Reference Preference Optimization (MRPO), leverages
broader prior knowledge from diverse reference models, substantially
enhancing preference learning capabilities compared to the single-reference
DPO. Our experiments demonstrate that LLMs finetuned with MRPO generalize
better in various preference data, regardless of data scarcity or
abundance. Furthermore, MRPO effectively finetunes LLMs to exhibit
superior performance in several downstream natural language processing
tasks such as GSM8K and TruthfulQA.
\end{abstract}

\section{Introduction}

Large Language Models (LLMs) have emerged as powerful tools in natural
language processing, capable of generating human-like text and performing
a myriad of language-related tasks \cite{lewkowycz2022solving,achiam2023gpt,touvron2023llama}.
However, aligning these models with human intentions and values remains
a challenging endeavor \cite{wang2023aligning}. Aligning LLMs with
curated human feedback emerges as a critical solution to guide LLM
response behavior and address this challenge.  Preference models like the Bradley-Terry model \cite{bradley1952rank} are often used to measure the alignment of reward functions with empirical preference data, facilitating an alignment framework using reinforcement learning with human feedback (RLHF \cite{christiano2017deep}). The framework
aims to optimize the preference models (maximizing preference reward)
while ensuring that LLM updates do not stray too far from
a base reference LLM model (minimizing a Kullback-Leibler (KL) divergence).
While RLHF has been successful in enhancing the helpfulness and accuracy of model-generated content \cite{ouyang2022training,stiennon2020learning},
it is unstable, complicated, and resource-intensive. 

Recent advancements, such as direct preference optimization (DPO \cite{rafailov2023direct})
and other likelihood-based preference learning \cite{zhao2023slic,azar2023general,ethayarajh2024kto,chen2024self},
have sought to replace the cumbersome RLHF with closed-form supervised
losses. Although they have demonstrated impressive performance compared
to RLHF and supervised finetuning (SFT), their exclusive reliance
on a single reference model restricts their potential, overlooking
the advantages of harnessing multiple pretrained LLMs. Incorporating
multiple LLMs to constrain the update of the optimized LLM towards
a preference dataset results in a model that reflects the characteristics
of all reference models while satisfying human preference. This is
increasingly important as the open-source community consistently introduces
new pretrained/SFT LLMs of varying scales, trained on diverse datasets
\cite{touvron2023llama,penedo2023refinedweb,jiang2023mistral}. It
underscores the necessity for a solution that employs multiple references for LLM finetuning, enabling the distillation
of knowledge from existing LLMs to enhance the alignment training stage.
\emph{Unfortunately, none of the prior works have
proposed a solution for utilizing multiple reference LLMs in direct preference optimization. }

The absence of such solutions stems from three challenges in formulating
closed-form multiple-reference preference learning. Firstly, deriving
a closed-form solution for the RLHF objective with multiple referencing
constraints is nontrivial due to the non-linearity of multiple KL
terms. Secondly, reference models with varying architecture, size,
and pretraining data may produce diverging outputs
given the same input. This divergence could potentially confuse the
learning process, leading to unstable training. Thirdly, determining
the contribution of each reference model during training poses a challenge,
requiring extensive tuning. In this paper, we tackle these three challenges,
presenting a simple and viable framework for direct preference optimization
utilizing multiple reference models.

To address the non-linearity of KL divergence, we propose maximizing
a simpler surrogate lower bound that allows for the derivation of
a novel closed-form solution incorporating multiple reference models.
Our solution is theoretically and empirically proven
superior to combining multiple DPO losses. Next, we propose
a clipped trust-regions optimization (CTRO) to address the second
challenge. By clipping the log probability of diverging reference
policy, we force the mismatch to be minimal to facilitate stable training
while retaining useful information from the reference policy
to guide the optimization. More importantly, the clipping rate is
dynamically adjusted according to the predicted likelihood of the
data, enabling a more adaptable update. Lastly, to automate the process
of determining the contribution of each reference model, we introduce
a dynamic mechanism (ARWC) to calculate the weight of each KL term
based on the confidence of the referencing LLMs. 

Our holistic framework, dubbed Multiple Reference Preference Optimization
(MRPO), undergoes throughout evaluation across various tasks. In preference
learning tasks involving 6 preference datasets, MRPO demonstrates
significant superiority over DPO and a naive combination of multiple
DPO losses, especially when preference data is limited with improvement
of up to 7\%. Furthermore, on general language understanding benchmarks
like the HuggingFace Open LLM Leaderboard \cite{beeching2023open},
MRPO exhibits average enhancements of 3-4\% compared to SFT and
1.2\% compared to DPO. Certain tasks show more than 5\% improvements over DPO.  Importantly, these enhancements are evident across various configurations, including different numbers of reference models (2 or 3) and sizes of LLMs (1B or 7B). Finally, we perform
a comprehensive ablation study to demonstrate the efficacy of CTRO
and ARWC mechanisms.

\section{Background}

\subsection{Problem Formulation and Notations}

We rely on \cite{azar2023general} to formally define the problem
and notations. Given an input $x\in\mathcal{X}$ where $\mathcal{X}$
is the finite space of input texts, a policy $\pi$ models a conditional
probability distribution $\pi(y|x)$ where $y\in\mathcal{Y}$ is the
output in the finite space of output texts. From a given $\pi$ and
$x$, we can sample an output as $y\sim\pi\left(\cdot|x\right)$.
Preference data is generated by sampling two outputs $(y,y'|x)$ from
policies $\pi$ and $\mu$ and presenting them to an agent, normally a human,
for rating to indicate which one is preferred. For example, $y\succ y'$
denotes $y$ is preferred to $y'$. A preference dataset is then denoted
as $\mathcal{D}=\left\{ y_{w}^{i},y_{l}^{i}|x^{i}\right\} _{i=1}^{N}$
where $N$ is the number of data points, $y_{w}$ and $y_{l}$ denote
the preferred (chosen) and dispreferred (rejected), respectively. Assuming that there exists a true
model of preference of the agent $p^{*}\left(y\succ y'|x\right)$
that assigns the agent's probability of $y$ being preferred to $y'$
given $x$. Using dataset $\mathcal{D}$, our goal is to find a policy
$\pi$ maximizing the expected preference while being close to a reference
policy $\pi_{ref}$, which results in the following optimization problem:
\begin{align}
 & \underset{\pi}{\max}\underset{y\sim\pi\left(\cdot|x\right)}{\underset{y'\sim\mu\left(\cdot|x\right)}{\underset{x\sim\rho}{\mathbb{E}}}}\left[\varPsi\left(p^{*}\left(y\succ y'|x\right)\right)\right]-\text{\ensuremath{\beta}}D_{KL}\left(\pi\left\Vert \pi_{ref}\right.\right)\label{eq:pref1}
\end{align}
where $\rho$ is the input distribution, $\varPsi$ is a scaled function, $D_{KL}$ is the Kullback--Leibler
divergence and $\beta$ is a hyperparameter. Usually, $\pi$ is initialized as $\pi_{ref}$ for stable optimization. 

\subsection{Preference Learning with Reward Function and Reinforcement Learning}

In this approach, Bradley-Terry model \cite{bradley1952rank} is employed
as the preference model:
\begin{equation}
p\left(y\succ y'|x\right)=\sigma\left(r_{\theta}\left(x,y\right)-r_{\theta}\left(x,y'\right)\right)
\end{equation}
where $\sigma$ denotes the sigmoid function and $r:\mathcal{X}\times\mathcal{Y}\rightarrow\mathbb{R}$
is a reward model parameterized by $\theta,$ which assigns a scalar
score to indicate the suitability of output $y$ for input $x$. In
earlier works \cite{christiano2017deep}, the reward model is trained
on $\mathcal{D}$ to minimize the negative log-likelihood loss:
\begin{equation}
\mathcal{L}_{R}=-\mathbb{E}_{x,y_{w},y_{l}\sim\mathcal{D}}\left[\log\,\sigma\left(r_{\theta}\left(x,y_{w}\right)-r_{\theta}\left(x,y_{l}\right)\right)\right]
\end{equation}
Given a trained reward model $r$, and the scaled function as $\varPsi\left(q\right)=\log\left(\frac{q}{1-q}\right) \forall q: 0<q<1$,
the objective in Eq. \ref{eq:pref1} can be rewritten as:
\begin{align}
 & \underset{\pi}{\max}\underset{y\sim\pi\left(\cdot|x\right)}{\underset{x\sim\rho}{\mathbb{E}}}\left[r\left(x,y\right)\right]-\text{\ensuremath{\beta}}D_{KL}\left(\pi\left\Vert \pi_{ref}\right.\right)\label{eq:pref1-rlhf}
\end{align}
This RLHF objective is employed to train LLMs such as Instruct-GPT \cite{ouyang2022training} using PPO \cite{schulman2017proximal}. 

\subsection{Direct Preference Optimization}

Reward training and RL finetuning require significant resources and
can be cumbersome. Recent approaches circumvent these challenges by
directly optimizing the policy via minimizing a preference-based
negative log-likelihood loss \cite{rafailov2023direct}:

\begin{equation}
\mathcal{L}_{DPO}=-\mathbb{E}_{x,y_{w},y_{l}\sim\mathcal{D}}\left[\log\sigma\left(\beta\log\frac{\pi_{\theta}\left(y_{w}|x\right)}{\pi_{ref}\left(y_{w}|x\right)}-\beta\log\frac{\pi_{\theta}\left(y_{l}|x\right)}{\pi_{ref}\left(y_{l}|x\right)}\right)\right]\label{eq:pref1-dpo}
\end{equation}
The term $r_{\theta}\left(x,y|\pi_{ref}\right)=\beta\log\frac{\pi_{\theta}\left(y|x\right)}{\pi_{ref}\left(y|x\right)}$
plays the role of an implicit reward. The authors in \cite{rafailov2023direct}
proved that minimizing this loss is equivalent to solving the optimization
problem in Eq. \ref{eq:pref1-rlhf}.

\section{Method}\label{sec:method}

\subsection{Multi-Reference Preference Optimization}

In this paper, we are focused on situations involving $K$ reference
policies $\left\{ \pi_{ref}^{k}\right\} _{k=1}^{K}$. Therefore, extending
from Eq. \ref{eq:pref1-rlhf}, our objective can be formulated as
a multi-reference RLHF objective:
\begin{align}
\underset{\pi}{\max}\underset{y\sim\pi\left(\cdot|x\right)}{\underset{x\sim\rho}{\mathbb{E}}}\left[r\left(x,y\right)\right]-\text{\ensuremath{\beta}}\left(\sum_{k=1}^{K}\alpha_{k}D_{KL}\left(\pi\left\Vert \pi_{ref}^{k}\right.\right)\right)\label{eq:prefm-rlhf-1}
\end{align}
where $\alpha_{k}$ are weighting coefficients for each reference
policy and $1=\sum_{k=1}^{K}\alpha_{k}$. Without loss of generality,
we denote $\pi_{ref}^{1}$ as the initializing reference policy for
the main policy $\pi_{\theta}$. This objective
was explored in previous studies, leading to enhancements in pure
RL problems \cite{le2022learning}.

However, addressing this optimization problem in LLMs through reward
learning and RL finetuning poses similar challenges to Eq. \ref{eq:pref1-rlhf}.
Hence, we propose an alternative approach that leverages direct preference
optimization for the scenario involving multiple reference policies.
We aim to find a closed-form solution for the multi-reference RLHF
objective in Eq. \ref{eq:prefm-rlhf-1}. Unfortunately, deriving an exact closed-form solution is challenging due to the
nonlinearity of $D_{KL}$ terms. To circumvent this, we suggest obtaining a \emph{closed-form
solution for a surrogate objective}, which serves as a lower bound
for the multi-reference RLHF objective. We summarize our findings as
a proposition below. 
\begin{prop}\label{prop1}
The following policy is the optimum for a lower bound of the
RLHF objective (Eq. \ref{eq:prefm-rlhf-1}):

\[
\pi^{*}\left(y|x\right)=\frac{1}{Z\left(x\right)}\tilde{\pi}{}_{ref}\left(y|x\right)\exp\left(\frac{1}{\beta}r\left(x,y\right)\right)
\]
where $\tilde{\pi}_{ref}(y|x)=\left(\sum_{k=1}^{K}\frac{\alpha_{k}}{\pi_{ref}^{k}\left(y|x\right)}\right)^{-1}$ 
and $Z\left(x\right)=\sum_{y}\tilde{\pi}_{ref}(y|x)\exp\left(\frac{1}{\beta}r\left(x,y\right)\right)$.
\end{prop}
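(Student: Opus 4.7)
The plan is to reduce the multi-reference RLHF objective to a single-reference problem for which the standard DPO-style derivation applies, with the reduction achieved by bounding the sum of KL terms above (and hence the overall objective below) using the concavity of $\log$. Specifically, I would first rewrite
\[
\sum_{k=1}^{K}\alpha_{k}D_{KL}\!\left(\pi\|\pi_{ref}^{k}\right)=\mathbb{E}_{y\sim\pi}\!\left[\sum_{k=1}^{K}\alpha_{k}\log\frac{\pi(y|x)}{\pi_{ref}^{k}(y|x)}\right],
\]
noting that the term inside the expectation is a convex combination of logarithms with weights $\alpha_{k}\geq 0$ summing to $1$, which is the setting in which Jensen's inequality yields a clean bound.

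The key step is to apply Jensen's inequality in the form $\sum_{k}\alpha_{k}\log a_{k}\leq\log\sum_{k}\alpha_{k}a_{k}$ with $a_{k}=\pi(y|x)/\pi_{ref}^{k}(y|x)$. Pulling the $\pi(y|x)$ factor out of the inner sum collapses the right-hand side into
\[
\sum_{k}\alpha_{k}\log\frac{\pi(y|x)}{\pi_{ref}^{k}(y|x)}\leq\log\!\left(\pi(y|x)\sum_{k}\frac{\alpha_{k}}{\pi_{ref}^{k}(y|x)}\right)=\log\frac{\pi(y|x)}{\tilde{\pi}_{ref}(y|x)},
\]
which is precisely the point at which the harmonic-mean-like object $\tilde{\pi}_{ref}$ of the proposition appears. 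Taking $\mathbb{E}_{y\sim\pi}$ and substituting into Eq.~\ref{eq:prefm-rlhf-1} gives the surrogate lower bound
\[
\mathcal{J}(\pi)=\mathbb{E}_{x\sim\rho,\,y\sim\pi}\!\left[r(x,y)\right]-\beta\,\mathbb{E}_{x\sim\rho,\,y\sim\pi}\!\left[\log\frac{\pi(y|x)}{\tilde{\pi}_{ref}(y|x)}\right].
\]

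The remaining task is to maximize $\mathcal{J}(\pi)$ in closed form. For this I would reuse the ``complete the KL'' manipulation employed in the derivation of single-reference DPO: defining $Z(x)$ and $\pi^{*}$ as in the proposition statement, one can add and subtract $\beta\log Z(x)$ to reorganize $\mathcal{J}(\pi)$ into $\beta\,\mathbb{E}_{x\sim\rho}[\log Z(x)]-\beta\,\mathbb{E}_{x\sim\rho}\,D_{KL}\!\left(\pi(\cdot|x)\,\|\,\pi^{*}(\cdot|x)\right)$. Since the first term does not depend on $\pi$ and KL divergence is non-negative and vanishes iff the arguments coincide, the unique maximizer is $\pi=\pi^{*}$.

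The main obstacle is conceptual rather than computational: one must choose the correct direction of Jensen so that $D_{KL}$ is upper bounded (giving a lower bound on the objective, as promised), and verify that the resulting aggregate reference is the harmonic-mean-like $\tilde{\pi}_{ref}$ rather than the weighted geometric mean $\prod_{k}(\pi_{ref}^{k})^{\alpha_{k}}$ that a naive identity-based rearrangement would produce. A minor bookkeeping point is that $\tilde{\pi}_{ref}$ need not sum to one over $y$; any overall normalization is absorbed into $Z(x)$ without altering the optimizer, so the closed form above is valid as stated.
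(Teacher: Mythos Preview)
Your proposal is correct and follows essentially the same approach as the paper: apply Jensen's inequality to the convex combination of $\log$ terms to obtain the harmonic-mean-like $\tilde{\pi}_{ref}$, then reduce to the single-reference DPO closed form via the ``complete the KL'' trick. Your remarks on the direction of the Jensen bound and on $\tilde{\pi}_{ref}$ not needing to normalize are apt and match the paper's treatment.
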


\begin{proof}
See Appendix \ref{subsec:Lower-Bound-Close-formed}.
\end{proof}
Following the derivation in \cite{rafailov2023direct} with our proposed
optimal policy $\pi^*$, we have the associated direct preference loss function
as follow,

\begin{eqnarray}
\mathcal{L}_{MRPO} & = & -\mathbb{E}_{x,y_{w},y_{l}\sim\mathcal{D}}\left[\log\sigma\left(\beta\log\frac{\pi_{\theta}\left(y_{w}|x\right)}{\tilde{\pi}_{ref}\left(y_{w}|x\right)}-\beta\log\frac{\pi_{\theta}\left(y_{l}|x\right)}{\tilde{\pi}_{ref}\left(y_{l}|x\right)}\right)\right]\label{eq:mrpo}
\end{eqnarray}
The loss function is similar to the DPO loss (Eq. \ref{eq:pref1-dpo}), but instead of using a single reference policy $\pi_{ref}$, we substitute it with a "virtual" reference policy $\tilde{\pi}_{ref}$ that aggregates information from all multiple reference policies.

\subsection{Clipped Trust-Regions Optimization (CTRO)\label{subsec:Optimization-with-Clipped}}

An issue that may arise when multiple reference policies are involved
in the optimization process is the mismatch between the reference
policy and the main policy. This is less common with single-reference
DPO, as the main policy is initialized as the reference policy, ensuring
a small mismatch across training. However, with multiple-reference
policies, those not chosen to initialize the main policy can result
in significantly different probabilities compared to the main one,
potentially leading to unstable training and, at times, loss divergence. 

To address this issue, we propose to constrain the virtual reference
policy $\tilde{\pi}_{ref}$ in the vicinity of the initializing reference
policy $\pi_{ref}^{1}$. As such, we propose to clip the log-probability
of the other reference policies $\pi_{ref}^{k>1}$ as follows, 
\begin{equation}
\log\hat{\pi}_{ref}^{k>1}\left(y|x\right)=\min\left(\mathrm{\max}\left(\log\pi_{ref}^{k>1}\left(y|x\right),\left(1+\epsilon\right)\log\pi_{ref}^{1}\left(y|x\right)\right),\left(1-\epsilon\right)\log\pi_{ref}^{1}\left(y|x\right)\right)
\end{equation}
where $\epsilon$ defines the vicinity range around $\pi_{ref}^{1}$.
Then we will replace $\pi_{ref}^{k>1}$ with $\hat{\pi}_{ref}^{k>1}$ in the $\tilde{\pi}_{ref}$ defined in Proposition \ref{prop1}. 

Using a fixed $\epsilon$ is overly restrictive and can be suboptimal
since different data and policies may require different trust-region
ranges. Thus, we suggest an adaptive approach to define $\epsilon$
based on the predicted likelihood of the data. Essentially, if the
log probability of a reference model for a given data point is large,
indicating high reliability, a conservative update should be applied
(smaller $\epsilon$) to exploit the reference policy. Conversely,
for lower log probabilities, we may prefer more exploration, allowing
reference values to diverge from the initial $\pi_{ref}^{1}$ (bigger
$\epsilon$). That is,
\begin{equation}
\epsilon\left(y|x\right)=\epsilon_{max}\frac{\left|\sum_{k=1}^{K}\log\,\pi_{ref}^{k}\left(y|x\right)\right|}{\sum_{y'}\left|\sum_{k=1}^{K}\log\,\pi_{ref}^{k}\left(y'|x\right)\right|}
\end{equation}
where $\epsilon_{max}$ is a hyperparameter specifying the maximum
ratio for an updating range. It is worth noting that since $0\leq\pi\left(\cdot\right)\leq1$,
the log-probability is always negative, meaning a larger absolute
value of the log-probability indicates less confidence. The denominator
represents the sum of all possible output $y'$, serving as a normalization
factor. In practice, in preference learning, we only have two outputs
($y_{w}$,$y_{l}$) per input $x$ so the denominator is the sum of
two terms. 

\subsection{Adaptive Reference Weighting Coefficients (ARWC)\label{subsec:The-Choice-of}}

If we have no preference or prior knowledge of the reference policies,
we can simply use $\alpha_{k}=1/K\,\forall k$. However, if we assume
that the reference policy obtains a reasonable ability to differentiate
between $y_{w}$ and $y_{l}$ such that even when it make wrong preference
(i.e $\log y_{l}$\textgreater$\log y_{w}$) the likelihood difference
should not be too large, we can introduce an automatic mechanism to
determine the value of $\alpha_{k}$ based on the confidence of the
reference policy. Specifically, we examine the absolute difference
between the log-probability of the two outputs ($y_{w}$, $y_{l}$)
as an indicator of the policy's confidence in its ability to discriminate between two outputs. In essence, a larger difference
suggests that the policy distinguishes one output from another more
decisively. Formally, we propose to adaptively compute reference weighting
coefficients as:

\begin{equation}
\alpha_{k}=\frac{\left|\log\,\pi_{ref}^{k}\left(y_{w}|x\right)-\log\,\pi_{ref}^{k}\left(y_{l}|x\right)\right|}{\sum_{i=1}^{K}\left|\log\,\pi_{ref}^{i}\left(y_{w}|x\right)-\log\,\pi_{ref}^{i}\left(y_{l}|x\right)\right|}
\end{equation}
The coefficient is normalized across reference policies, giving greater weight to those with higher discriminative confidence.

\subsection{Comparison with Multiple DPO }

When having multiple reference policies, a naive solution for
direct preference learning is to combine multiple DPO losses (Multi-DPO):
\begin{equation}
\mathcal{L}_{Multi-DPO}=-\mathbb{E}_{x,y_{w},y_{l}\sim\mathcal{D}}\left[\sum_{k=1}^{K}\alpha_{k}\log\sigma\left(\beta\log\frac{\pi_{\theta}\left(y_{w}|x\right)}{\pi_{ref}^{k}\left(y_{w}|x\right)}-\beta\log\frac{\pi_{\theta}\left(y_{l}|x\right)}{\pi_{ref}^{k}\left(y_{l}|x\right)}\right)\right]\label{eq:mdpo}
\end{equation}
We can show that our proposed MRPO is more desirable than this MDPO
loss. To see that, we analyze the gradient wrt. the policy parameters
$\theta$ of the two losses: (1)
$\nabla_{\theta}\mathcal{L}_{MRPO} =-\beta\mathbb{E}_{x,y_{w},y_{l}\sim\mathcal{D}}\left[\sigma\right.\left(r_{\theta}\left(x,y_{l}|\tilde{\pi}_{ref}\right)-r_{\theta}\left(x,y_{w}|\tilde{\pi}_{ref}\right)\right)\nonumber \times\left.\left(\nabla_{\theta}\log\pi_{\theta}\left(y_{w}|x\right)-\nabla_{\theta}\log\pi_{\theta}\left(y_{l}|x\right)\right)\right]$, and (2)
$\nabla_{\theta}\mathcal{L}_{Multi-DPO} =-\beta\mathbb{E}_{x,y_{w},y_{l}\sim\mathcal{D}}\left[\alpha_{k}\right.\sum_{k=1}^{K}\sigma\left(r_{\theta}\left(x,y_{l}|\pi_{ref}^{k}\right)-r_{\theta}\left(x,y_{w}|\pi_{ref}^{k}\right)\right)\nonumber\times\left.\left(\nabla_{\theta}\log\pi_{\theta}\left(y_{w}|x\right)-\nabla_{\theta}\log\pi_{\theta}\left(y_{l}|x\right)\right)\right]$.

The gradients mean that the rate of increasing/decreasing the likelihood
of preferred/dispreferred outputs is scaled by the reward error $\sigma\left(r_{\theta}\left(x,y_{l}|\tilde{\pi}_{ref}\right)-r_{\theta}\left(x,y_{w}|\tilde{\pi}_{ref}\right)\right)$
or $\sum_{k=1}^{K}\alpha_{k}\sigma\left(r_{\theta}\left(x,y_{l}|\pi_{ref}^{k}\right)-r_{\theta}\left(x,y_{w}|\pi_{ref}^{k}\right)\right)$, corresponding to MRPO and Multi-DPO, respectively.
Under mild assumptions, we can have the following result:
\begin{prop}
Assume that reference policies are constrained to be relatively close
to each other, ensuring that $\left\{ d_{k}=r_{\theta}\left(x,y_{l}|\pi_{ref}^{k}\right)-r_{\theta}\left(x,y_{w}|\pi_{ref}^{k}\right)\right\} _{k=1}^{K}$
share the same sign $\forall k$, then

\[
\begin{cases}
\sigma\left(\beta\log\frac{\pi_{\theta}\left(y_{l}|x\right)}{\tilde{\pi}_{ref}\left(y_{l}|x\right)}-\beta\log\frac{\pi_{\theta}\left(y_{w}|x\right)}{\tilde{\pi}_{ref}\left(y_{w}|x\right)}\right) & \geq\sum_{k=1}^{K}\alpha_{k}\sigma\left(\beta\log\frac{\pi_{\theta}\left(y_{l}|x\right)}{\pi_{ref}^{k}\left(y_{l}|x\right)}-\beta\log\frac{\pi_{\theta}\left(y_{w}|x\right)}{\pi_{ref}^{k}\left(y_{w}|x\right)}\right)\,\,\,\forall d_{k}\geq0\\
\sigma\left(\beta\log\frac{\pi_{\theta}\left(y_{l}|x\right)}{\tilde{\pi}_{ref}\left(y_{l}|x\right)}-\beta\log\frac{\pi_{\theta}\left(y_{w}|x\right)}{\tilde{\pi}_{ref}\left(y_{w}|x\right)}\right) & \leq\sum_{k=1}^{K}\alpha_{k}\sigma\left(\beta\log\frac{\pi_{\theta}\left(y_{l}|x\right)}{\pi_{ref}^{k}\left(y_{l}|x\right)}-\beta\log\frac{\pi_{\theta}\left(y_{w}|x\right)}{\pi_{ref}^{k}\left(y_{w}|x\right)}\right)\,\,\,\forall d_{k}\leq0
\end{cases}
\]
\end{prop}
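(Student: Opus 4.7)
The plan is to prove the first case (assuming $d_k \ge 0$ for all $k$) directly, and then obtain the second case by a sign-flipping symmetry. Swapping the roles of $y_w$ and $y_l$ throughout negates every $d_k$ and also negates $\tilde{d}$, and the identity $\sigma(-z)=1-\sigma(z)$ converts the inequality $\sigma(\tilde{d})\ge\sum_k\alpha_k\sigma(d_k)$ for non-negative $d_k$ into the desired $\sigma(\tilde{d})\le\sum_k\alpha_k\sigma(d_k)$ for non-positive $d_k$. This symmetry is valid because $\tilde{\pi}_{ref}$ is constructed pointwise and symmetrically over outputs, so the swap acts identically on both sides of the inequality.

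For the $d_k\ge0$ case my first step is to exploit the concavity of $\sigma$ on $[0,\infty)$. By Jensen's inequality applied with the convex weights $\{\alpha_k\}$,
\begin{equation*}
\sum_{k=1}^{K} \alpha_k \sigma(d_k) \;\le\; \sigma\!\Bigl(\sum_{k=1}^{K} \alpha_k d_k\Bigr).
\end{equation*}
Since $\sigma$ is strictly increasing, it suffices to establish $\tilde{d}\ge\sum_{k=1}^{K}\alpha_k d_k$, which then yields $\sigma(\tilde{d})\ge\sigma(\sum_k\alpha_k d_k)\ge\sum_k\alpha_k\sigma(d_k)$.

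Writing $C=\beta\log(\pi_\theta(y_l|x)/\pi_\theta(y_w|x))$, we have $d_k = C + \beta[\log\pi_{ref}^k(y_w|x)-\log\pi_{ref}^k(y_l|x)]$, with an identical formula for $\tilde{d}$ using $\tilde{\pi}_{ref}$ in place of $\pi_{ref}^k$. The constant $C$ cancels, so $\tilde{d}\ge\sum_k\alpha_k d_k$ reduces to
\begin{equation*}
\log \tilde{\pi}_{ref}(y_w|x) - \log \tilde{\pi}_{ref}(y_l|x) \;\ge\; \sum_{k=1}^{K} \alpha_k \bigl[\log \pi_{ref}^k(y_w|x) - \log \pi_{ref}^k(y_l|x)\bigr].
\end{equation*}
Because $\tilde{\pi}_{ref}(y|x)=\bigl(\sum_k\alpha_k/\pi_{ref}^k(y|x)\bigr)^{-1}$ is a weighted harmonic mean, the HM--GM inequality gives $\log\tilde{\pi}_{ref}(y|x)\le\sum_k\alpha_k\log\pi_{ref}^k(y|x)$ for both $y=y_w$ and $y=y_l$. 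The target inequality is therefore equivalent to saying that the HM--GM gap for $\{\pi_{ref}^k(y_w|x)\}_k$ is no larger than the HM--GM gap for $\{\pi_{ref}^k(y_l|x)\}_k$.

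The main obstacle is this last comparison. The two one-sided HM--GM inequalities point in opposite directions, so the argument must exploit the sign assumption $d_k\ge0$ to align them; in particular, the assumption translates into the uniform constraint $\log\pi_{ref}^k(y_w|x)-\log\pi_{ref}^k(y_l|x)\ge -C/\beta$ for every $k$, which couples the dispersions of $\{\pi_{ref}^k(y_w|x)\}_k$ and $\{\pi_{ref}^k(y_l|x)\}_k$. I would close the argument either by a Chebyshev-sum / rearrangement inequality that compares the two HM--GM gaps using this coupling, or by writing the difference of gaps as a single explicit sum over pairs $(j,k)$ and showing termwise non-negativity via the $d_k\ge0$ constraint (with the $d_k\le0$ case handled by the symmetry set up in the first paragraph).
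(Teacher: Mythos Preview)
Your reduction is sound up to the point where you need
\[
\log\tilde{\pi}_{ref}(y_w|x)-\log\tilde{\pi}_{ref}(y_l|x)\;\ge\;\sum_{k}\alpha_k\bigl[\log\pi_{ref}^k(y_w|x)-\log\pi_{ref}^k(y_l|x)\bigr],
\]
which, as you correctly observe, is equivalent to asking that the (log) GM--HM gap of $\{\pi_{ref}^k(y_l|x)\}_k$ be at least as large as that of $\{\pi_{ref}^k(y_w|x)\}_k$. The genuine gap is that this comparison does \emph{not} follow from the sign hypothesis $d_k\ge 0$. Take $K=2$, $\alpha_1=\alpha_2=\tfrac12$, $\beta=1$, $\pi_\theta(y_l|x)=\pi_\theta(y_w|x)$ (so $C=0$, and $d_k\ge 0$ just means $\pi_{ref}^k(y_w|x)\ge\pi_{ref}^k(y_l|x)$), and set $\pi_{ref}^1(y_w)=1$, $\pi_{ref}^2(y_w)=0.01$, $\pi_{ref}^1(y_l)=0.1$, $\pi_{ref}^2(y_l)=0.01$. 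Then $d_1=\log 10>0$, $d_2=0$, yet the GM/HM ratio for the $y_w$-values ($\approx 5.05$) is \emph{larger} than for the $y_l$-values ($\approx 1.74$), so your target inequality fails; indeed $\tilde d\approx 0.085$ while $\sum_k\alpha_k d_k\approx 1.15$. The GM--HM gap measures dispersion, and the sign constraint $a_k\ge b_k$ puts no useful restriction on the relative dispersion of the two tuples, so neither a Chebyshev/rearrangement argument nor a pairwise decomposition can rescue this step.

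For comparison, the paper does \emph{not} try to show $\tilde d\ge\sum_k\alpha_k d_k$. After the same Jensen step on $\sigma$, it splits each $d_k$ as $\log\frac{\pi_\theta(y_l)}{\pi_{ref}^k(y_l)}+\log\frac{\pi_{ref}^k(y_w)}{\pi_\theta(y_w)}$ and applies Jensen for the concave $\log$ to each of the two sums separately, obtaining $\sigma\bigl(\beta\log\sum_k\alpha_k\frac{\pi_\theta(y_l)}{\pi_{ref}^k(y_l)}+\beta\log\sum_k\alpha_k\frac{\pi_{ref}^k(y_w)}{\pi_\theta(y_w)}\bigr)\ge\sum_k\alpha_k\sigma(d_k)$. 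The first $\log$-term is exactly $\beta\log\frac{\pi_\theta(y_l)}{\tilde\pi_{ref}(y_l)}$; the remaining work is to convert the second $\log$-term into $-\beta\log\frac{\pi_\theta(y_w)}{\tilde\pi_{ref}(y_w)}$, which the paper does via an additional inequality using $1/\alpha_k\ge\alpha_k$. If you want to follow the paper's line, you should start from that two-term Jensen bound rather than from the single scalar comparison $\tilde d\ge\sum_k\alpha_k d_k$, which is strictly stronger and, as the example shows, not available.
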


\begin{proof}
See Appendix \ref{subsec:MRPO-and-Multi-DPO}.
\end{proof}
As a result, when the reward estimations are wrong, i.e., $\forall d_{k}>0$,
MRPO update will have a higher rate of following the likelihood gradient than Multi-DPO,
which is desirable because we want to fix the likelihood faster to
correct the implicit reward. On the contrary, when the reward estimations
are right, i.e., $\forall d_{k}<0$, MRPO update will have a lower rate of
following the likelihood gradient than Multi-DPO, which may help reduce
over-fitting and stabilize the convergence.

\section{Experimental Results}\label{sec:exp}

In our experiments, we will always refer to the first (initializing) 
reference model as RefM1, the second as RefM2, and so forth. The Base model is the original
LLM that will undergo finetuning on preference data and is initialized as RefM1. Throughout experiments, if
not stated otherwise, Llama (L), Mistral (M), and Qwen (Q) refer to \emph{Llama-2-7b-chat-hf}, \emph{OpenHermes-2.5-Mistral-7B}, and \emph{Qwen1.5-7B-Chat}, respectively.
Unless specified otherwise, we finetune these LLMs using LoRA 4-bit quantization to enable faster training and accommodate our hardware of a single Tesla A100 GPU with 32GB of memory. Further training details
are provided in Appendix \ref{train_detail}.

To assess model performance, we finetune the model with preference
data and evaluate it on preference learning and general language
understanding (GLU) tasks. In preference learning, we measure the
\textbf{Preference Accuracy} in predicting whether two responses, $y_{1}|x$ and
$y_{2}|x$ as chosen (preferred) or rejected (dispreferred). In particular,
following Rafailov et al. (2023), for DPO and Multi-DPO, we use $\beta\log\frac{\pi_{\theta}\left(y|x\right)}{\pi_{ref}\left(y|x\right)}$
as the reward $r(x,y)$ for each response, with the response classified as chosen
or rejected if it has a higher or lower reward, respectively. As
for MRPO, we use $\beta\log\frac{\pi_{\theta}\left(y|x\right)}{\tilde{\pi}_{ref}\left(y|x\right)}$
as the reward. Another preference metric worth considering is the \textbf{Reward Margin}, which quantifies the difference between the chosen and rejected rewards: $r(x,y_w)-r(x,y_l)$. In GLU, we employ the finetuned LLM to generate output
for each input and use the \textbf{GLU Metric} provided by the task. All of these measurements are desirable when they are higher.

\subsection{Performance when Preference Data is Scarce \label{subsec:Performance-when-Preference}}

\textbf{Datasets} In many real-world scenarios, human feedback is
limited. Here, we curate 3 small preference datasets (hundreds to
a few thousand data points) to simulate the scarcity of feedback data.
Each training dataset comprises a random subset from a larger
public preference dataset available in the Hugging Face data repository.
The remaining portions of the dataset will be utilized as testing
data. These datasets are generated with inputs, outputs, and preference
rankings often produced by powerful LLMs like GPT4, making them suitable
for training smaller LLMs such as Llama and Mistral. The datasets
are labeled as S1, S2, and S3, and their details are given in Appendix
Table \ref{tab:Small-datasets:-S1} and Appendix \ref{subsec:Datasets}.

\begin{table}
\begin{centering}
\begin{tabular}{ccccccc}
\hline 
\multirow{2}{*}{Dataset} & \multicolumn{2}{c}{S1} & \multicolumn{2}{c}{S2} & \multicolumn{2}{c}{S3}\tabularnewline
 & L$\leftarrow$M & M$\leftarrow$L & L$\leftarrow$M & M$\leftarrow$L & L$\leftarrow$M & M$\leftarrow$L\tabularnewline
\hline 
DPO & 93.9{\footnotesize{}$\pm$1.4} & \emph{98.7}\emph{\footnotesize{}$\pm$0.7} & 94.4{\footnotesize{}$\pm$2.9} & \emph{96.7}\emph{\footnotesize{}$\pm$1.8} & \emph{54.8}\emph{\footnotesize{}$\pm$5.4} & \emph{52.2}\emph{\footnotesize{}$\pm$3.7}\tabularnewline
Multi-DPO & \emph{95.6}\emph{\footnotesize{}$\pm$1.7} & 98.0{\footnotesize{}$\pm$0.8} & \emph{95.5}\emph{\footnotesize{}$\pm$1.2} & 96.3{\footnotesize{}$\pm$1.2} & 45.6{\footnotesize{}$\pm$4.4} & 51.3{\footnotesize{}$\pm$1.2}\tabularnewline
MRPO (Ours) & \textbf{97.2}\textbf{\footnotesize{}$\pm$1.7} & \textbf{99.5}\textbf{\footnotesize{}$\pm$0.8} & \textbf{97.0}\textbf{\footnotesize{}$\pm$3.2} & \textbf{97.0}\textbf{\footnotesize{}$\pm$1.6} & \textbf{61.3}\textbf{\footnotesize{}$\pm$5.2} & \textbf{56.0}\textbf{\footnotesize{}$\pm$1.9}\tabularnewline
\hline 
\end{tabular}
\par\end{centering}
~

\caption{Final mean$\pm$std. testing accuracy ($\times100$) on small datasets
over 5 runs. Bold denotes the best, statistically different from the others as Cohen effect size > 0.5. Italic denotes the second best.\label{tab:Final-meanstd.-test}}
\end{table}

\textbf{Baselines }Unless stated otherwise, we always compare our MRPO with DPO and Multi-DPO using
the same common hyperparameters such as learning rate ($10^{-5}$), batch size (8),
number of epochs (3), and $\beta=0.1$. For Multi-DPO, we have to
use clipped trust regions to ensure RefM2 is close to RefM1. Otherwise,
the learning will not converge. To make fair comparison, both MRPO
and Multi-DPO use $\epsilon_{max}=0.1$ and incorporate the adaptive
$\epsilon$ ($\mathsection$ \ref{subsec:Optimization-with-Clipped})
and $\alpha$ ($\mathsection$ \ref{subsec:The-Choice-of}) mechanisms.
For Multi-DPO and MRPO, we consider 2 reference models ($K=2$), and examine 2 modes of initialization: (1) L $\leftarrow$ M,
the Base model is initialized as Llama (RefM1) for all baselines,
and the RefM2 is Mistral for MRPO and Multi-DPO; (2) M $\leftarrow$
L, the order is reversed.

\textbf{Results }
Table \ref{tab:Final-meanstd.-test} reports the final preference accuracy on test sets. MRPO
consistently outperforms DPO by a significant margin,
3-7\% and 1-4\% for L $\leftarrow$ M and M $\leftarrow$ L, respectively.
Mode L $\leftarrow$ M observes more improvement because Mistral is
stronger than Llama in these tasks and thus, using Mistral as RefM2
will bring more benefits than using Llama. On the other hand, Multi-DPO
underperforms DPO and MRPO in many cases, indicating that combining multiple
references in a naive way, even when equipped with our CTRO and ARWC, does not yield favorable results.

Appendix Fig. \ref{fig:Chosen/Rejection-prediction-accu}
depicts the testing preference accuracy curves over the training duration for all methods
across 2 initialization modes. These curves demonstrate that all methods
boost the chosen/rejection prediction accuracy of the Base model (the
first evaluation point in each graph is lower than the following points).
Among all, MRPO exhibits early outperformance compared to other baselines
and maintains its superior performance until convergence.

\subsection{Can MRPO Scale to Big Preference Datasets?}

\textbf{Datasets} To assess the scalability of MRPO to real and large
datasets, we utilize three big preference datasets: HelpSteer, Ultrafeedback,
and Nectar (see Appendix Table \ref{tab:Big-datasets:-HelpSteer,}).
Each dataset employs human rankings to assess the outputs generated
by powerful LLMs. 
Finetuning LLMs for just one epoch is sufficient for large datasets to achieve learning
convergence. We use the provided train/test split for HelpSteer and
Ultrafeedback. We randomly allocate 90\% of the data for training
purposes, reserving the remaining 10\% for testing for Nectar.

\textbf{Baselines and Results} In this task, we evaluated MRPO ($K=2$) against
DPO, the top two methods from our prior tests, using the same initialization
approaches detailed earlier. The preference accuracy result, reported in Table \ref{tab:Final-meanstd.-test-1}'s upper row
and Appendix Fig. \ref{fig:3big}, demonstrates that MRPO consistently
surpasses DPO in real-world preference datasets, showing an improvement
gain of approximately 3-5\% and up to 1\% for L$\leftarrow$M and
M$\leftarrow$L modes, respectively. Since Preference Accuracy can sometimes be unclear in demonstrating performance, especially with borderline inputs where the chosen and rejected ground truth may not be entirely accurate, we also examine the Reward Margin of DPO and MRPO on these datasets. The result, displayed in Table \ref{tab:Final-meanstd.-test-1}'s lower row and Appendix Fig. \ref{fig:3bigmargin}, demonstrates MRPO's superior ability to separate chosen and rejected outputs, as evidenced by a significantly higher Reward Margin of 10-20\% compared to DPO. The findings confirm MRPO's capability
to effectively scale with large datasets for preference learning tasks.

\begin{table}
\begin{centering}
\small
\begin{tabular}{cccccccc}

\hline 
\multirow{2}{*}{Metric} & \multirow{2}{*}{Dataset} & \multicolumn{2}{c}{HelpSteer} & \multicolumn{2}{c}{Ultrafeedback} & \multicolumn{2}{c}{Nectar}\tabularnewline
 &  & L$\leftarrow$M & M$\leftarrow$L & L$\leftarrow$M & M$\leftarrow$L & L$\leftarrow$M & M$\leftarrow$L\tabularnewline
\hline 
\multirow{2}{*}{Accuracy} & DPO & 68.9{\footnotesize{}$\pm$0.4} & 70.8{\footnotesize{}$\pm$4.3} & 69.8{\footnotesize{}$\pm$0.9} & 72.0{\footnotesize{}$\pm$1.1} & 75.6{\footnotesize{}$\pm$3.9} & 78.5{\footnotesize{}$\pm$3.6}\tabularnewline
 & MRPO & \textbf{73.6}\textbf{\footnotesize{}$\pm$1.6} & \text{71.6}\text{\footnotesize{}$\pm$5.2} & \textbf{72.9}\textbf{\footnotesize{}$\pm$2.9} & \textbf{73.2}\textbf{\footnotesize{}$\pm$1.9} & \textbf{79.2}\textbf{\footnotesize{}$\pm$1.2} & \text{78.7}\text{\footnotesize{}$\pm$3.0}\tabularnewline
\hline 
\multirow{2}{*}{Margin} & DPO & 0.64{\footnotesize{}$\pm$0.01} & 0.95{\footnotesize{}$\pm$0.20} & 0.70{\footnotesize{}$\pm$0.07} & 1.14{\footnotesize{}$\pm$0.12} & 1.52{\footnotesize{}$\pm$0.08} & 2.65{\footnotesize{}$\pm$0.29}\tabularnewline
 & MRPO & \textbf{0.77}\textbf{\footnotesize{}$\pm$0.07} & \textbf{1.05}\textbf{\footnotesize{}$\pm$0.20} & \textbf{0.82}\textbf{\footnotesize{}$\pm$0.10} & \textbf{1.27}\textbf{\footnotesize{}$\pm$0.26} & \textbf{1.73}\textbf{\footnotesize{}$\pm$0.05} & \textbf{3.13}\textbf{\footnotesize{}$\pm$0.25}\tabularnewline
\hline 
\end{tabular}
\par\end{centering}
~

\caption{Final mean$\pm$std. testing preference accuracy (upper, $\times100$) and reward
margin (lower) on big datasets over 3 runs. Bold is best, statistically different from others
as Cohen effect size > 0.5.\label{tab:Final-meanstd.-test-1}}
\end{table}

\subsection{How Effective is MRPO on General Language Understanding Benchmarks?}

For our evaluation benchmark, we utilized the Huggingface Open LLM
Leaderboard, a standard in the field \cite{beeching2023open}. In
this benchmark, we explore a variety of datasets, collectively covering
tasks such as math (GSM8k) multi-task language understanding (MMLU),
human falsehood understanding (TruthfulQA), and commonsense reasoning
(Arc, HellaSwag, Winogrande). The evaluation process presents the
language models with few-shot in-context examples and questions. We
apply the standard evaluation protocol to evaluate and report average scores (GLU Metrics) across all datasets. 

In this experiment, we consider the third reference model RefM3 as Qwen to verify the scalability of our method to $K=3$ on the standard GLU benchmark. We examine the following initialization modes: (1) L $\leftarrow$ M, Q where Mistral and Qwen are additional reference models for Base model Llama, (2) M $\leftarrow$L, Q where Llama and Qwen are additional reference models for Base model Mistral, and (3) Q $\leftarrow$M, L where Mistral and Llama are additional reference models for Base model Qwen. Following prior practices \cite{chen2024self}, we adopt \textit{full finetuning} to finetune the Base models on Ultrafeedback dataset using DPO and MRPO and evaluate the LLMs using
Language Model Evaluation Harness library \cite{gao10256836framework}. We
report all results in Table \ref{tab:fullall}. MRPO leads to a notable enhancement in the Base model of 3.5\%, 1.4\%, and 0.8\%, surpassing DPO with an average improvement of 1.1\%, 1.0\% and 1.3\% for initialization (1), (2) and (3), respectively. Notably, there are several cases in which MRPO can outperform DPO by a huge margin, such as 6.8\% in GSM8K (M $\leftarrow$L, Q), 5.8\% in TruthfulQA (L $\leftarrow$M, Q) and 5\% in GSM8K (Q $\leftarrow$M, L). 

We also explore MRPO ($K=2$, L $\leftarrow$M and M $\leftarrow$L) and observe that this variant consistently surpasses DPO in performance, albeit falling short of MRPO ($K=3$), suggesting the advantages of incorporating more reference models.  Full results can be found in Appendix Table \ref{tab:fullall2}. 





\begin{table}
\begin{centering}
\small
\begin{tabular}{cccccccc}
\hline 
Dataset & GSM8K & TruthfulQA & HellaSwag & MMLU & Arc-easy & Winograde & Avg.\tabularnewline
\hline 
Base (Mistral) & 49.6 & 44.5 & 62.8 & 60.8 & 83.5 & \textbf{74.4} & 62.6\tabularnewline
DPO & 53.7 & \textbf{53.3} & \textbf{66.6} & 60.1 & 82.7 &73.6 & 65.0 \tabularnewline
MRPO (M$\leftarrow$L,Q) & \textbf{60.5} & 51.43 & 65.83 & \textbf{61.5} & \textbf{84.0} & 73.4 & \textbf{66.1}\tabularnewline
\hline
Base (LLama) & 23.9 & 37.8 & 57.8 & 46.4 & 60.8 & 66.4 & 51.0\tabularnewline
DPO & 22.0 & 39.5 & \textbf{59.4} & 46.3	 & 73.5 & \textbf{67.3} & 51.4 \tabularnewline
MRPO (L$\leftarrow$M,Q) & \textbf{24.3} & \textbf{45.3} & 57.9 & \textbf{46.4} & \textbf{74.1} & 66.7 & \textbf{52.4} \tabularnewline
\hline
Base (Qwen) &   21.1	 & 53.6	  & 58.8 & 	60.1 & 	68.3 & 65.2	 & 54.5 \tabularnewline
DPO  &  18.7 & 	54.8	 & \textbf{60.4}	 & \textbf{60.3}	 & 65.2	 & 64.3	 & 54.0 \tabularnewline
MRPO (Q$\leftarrow$M,L)  & \textbf{23.7} & 	\textbf{54.9} & 	59.0 & 	60.1 & 	\textbf{68.4} & 	\textbf{65.4}  & \textbf{55.3} \tabularnewline
\hline 
\end{tabular}
\par\end{centering}
~

\caption{M$\leftarrow$L: test performance ($\times100$) across HuggingFace
Open LLM Leaderboard datasets. Bold denotes best.\label{tab:fullall}}
\end{table}

\subsection{Distillation from Strong to Weak LLMs}

In this experiment, we assess the advantage of MRPO in transferring
likelihood predictions from larger LLMs to smaller ones. This has
potential applications in improving finetuning small LLMs on mobile
devices. It is worth mentioning that MRPO allows the precomputation
of log probabilities from large LLMs, ensuring efficient training
on low-resource devices. In particular, we choose TinyLlama 
(1.1B parameters) and Mistral (7B parameters) as RefM1 and RefM2, respectively.
We finetune TinyLlama on the Ultrafeedback dataset using a process
akin to the one described in the preceding section and report the
Open LLM Leaderboard results in Table \ref{tab:tiny}. Overall, MRPO keeps outperforming DPO and Base models by 0.2\% and 0.5\%, respectively. While the gain is modest compared to previous results, our findings confirm the benefit of applying MRPO to weak LLMs, especially when the cost of training can be almost similar to that of DPO.
\begin{table}
\begin{centering}
\small
\begin{tabular}{cccccccc}
\hline 
Dataset & GSM8K & TruthfulQA & HellaSwag & MMLU & Arc-easy & Winograde & Avg.\tabularnewline
\hline 
Base (TinyLlama) & \textbf{2.5} & 32.9 & 45.7 & 24.9 & 66.8 & 61.1 & 38.9\tabularnewline
DPO & 2.1 & \textbf{33.1} & 46.0 & 25.4 & \textbf{67.0} & \textbf{61.6} & 39.2\tabularnewline
MRPO (Ours) & 2.1 & 33.0 & \textbf{47.0} & \textbf{25.8} & \textbf{67.0} & \textbf{61.6} & \textbf{39.4}\tabularnewline
\hline 
\end{tabular}
\par\end{centering}
~

\caption{TinyLlama$\leftarrow$Mistral: test performance ($\times100$) across
HuggingFace Open LLM Leaderboard datasets. Bold denotes best.\label{tab:tiny}}
\end{table}

\subsection{Ablation Study}

\subsubsection{The Importance of Clipped Trust-Region Optimization (CTRO)}

\textbf{Can MRPO work without CTRO} ($\mathsection$
\ref{subsec:Optimization-with-Clipped}) To investigate the question,
we utilize the small yet relatively challenging dataset S3 and conduct experiments with
various $\epsilon_{max}$ values: $\epsilon_{max}=0$ (MRPO equals
DPO), $\epsilon_{max}=10^{6}$(No clip), and $\epsilon_{max}=\left\{ 1,0.1,0.01\right\} $. We also examine different reference models:
(i) We employ RefM2 (Llama supervised tuning on Alpaca dataset) as
a finetuned model of RefM1 (Llama), aiming to ensure that RefM2 is
closely related to RefM1 (same family); and (ii) RefM2 (Mistral) is from a different
family from RefM1 (LLama), which indicates a larger mismatch between the
two reference models. Here, different families mean that LLMs can vary in
architecture and/or be pretrained on distinct corpora, where the log probability of these LLMs for the same input
can differ by hundreds of units. The final preference
accuracy is reported in Table \ref{tab:Clipped-Trust-Region-Optimizatio}. 

We observe that for setting (ii) when $\epsilon_{max}\geq1$, the
training loss can escalate significantly, reaching values as high
as 10, and occasionally even infinity, which highlights the instability
of training in the absence of CTRO. This is evident in the poor performance
of $\epsilon_{max}=\left\{ 10^{6},1\right\} $. Utilizing CTRO with
small $\epsilon_{max}$ results in more stable training. However,
excessive constraint, where $\epsilon_{max}$ is too small, can lead
to nearly identical performance compared to DPO. In setting (i), even
with big $\epsilon_{max}$, the training is stable. However, the performance
is not as good as setting (ii). In particular, with the best $\epsilon_{max}=0.1$,
in setting (i), MRPO only achieves a 2\% improvement over DPO, whereas
in setting (ii), the improvement gap widens to 7\%. This discrepancy
is understandable because without a diverse reference source, RefM2
may not exhibit significant advantages over RefM1, thereby limiting
the extent of improvement. Therefore, we conclude that it is more
advantageous to leverage diverse reference models, and employing CTRO
is required to ensure training stability in this case.

\textbf{Is adaptive $\epsilon$ necessary? }We conduct
more experiments with fixed $\epsilon=0.1$ and adaptive $\epsilon_{max}=0.1$
on S1, S2 and S3 using Llama and Mistral for RefM1 and RefM2, receptively.
The results depicted in Appendix Fig. \ref{fig:Analysis-on-} (top)
illustrate that fixed $\epsilon$ is still better than DPO, and adaptive
$\epsilon$ outperforms significantly fixed $\epsilon$ across all
datasets, emphasizing the importance of this mechanism in MRPO.

\subsubsection{Analyzing Reference Weighting Coefficients}

In this section, using adaptive $\epsilon_{max}=0.1$, we compare
the adaptive $\alpha$ (ARWC) mechanism proposed in $\mathsection$
\ref{subsec:The-Choice-of} with different fixed values of $\alpha=\left\{ 0.1,0.5,0.9\right\} $
on S1, S2 and S3 using Llama and Mistral for RefM1 and RefM2, receptively.
As shown in Appendix Fig. \ref{fig:Analysis-on-} (bottom), adaptive
$\alpha$ demonstrates competitive performance, either outperforming
or closely matching the performance of the best fixed $\alpha$ across
all datasets. Given the minor discrepancies observed and the associated
cost of hyperparameter tuning, we have opted to utilize adaptive $\alpha$
for all other experiments.

\begin{table}
\begin{centering}
\small
\begin{tabular}{ccccccccccc}
\hline 
\multirow{2}{*}{Setting} & \multicolumn{2}{c}{$\epsilon_{max}=0$ (DPO)} & \multicolumn{2}{c}{$\epsilon_{max}=10^{6}$ (No clip)} & \multicolumn{2}{c}{$\epsilon_{max}=1$} & \multicolumn{2}{c}{$\epsilon_{max}=0.1$} & \multicolumn{2}{c}{$\epsilon_{max}=0.01$}\tabularnewline
 & (i) & (ii) & (i) & (ii) & (i) & (ii) & (i) & (ii) & (i) & (ii)\tabularnewline
\hline 
Test Acc. & 0.54 & 0.54 & 0.51 & 0.45 & 0.53 & 0.49 & 0.56 & \textbf{0.61} & 0.54 & 0.55\tabularnewline
\hline 
\end{tabular}\\
~
\par\end{centering}
\caption{Clipped Trust-Region Optimization impact on S3. In setting (i), two
reference models belong to the same family but differ in the finetuning
dataset, whereas in setting (ii), they are from different families
of LLMs. The reported numbers are the mean accuracy over 5 runs. \label{tab:Clipped-Trust-Region-Optimizatio}}

\end{table}

\section{Related work}

The integration of human input has been instrumental in advancing
the performance of Large Language Models (LLMs) in diverse domains,
such as question answering \cite{nakano2021webgpt}, document summarization
\cite{stiennon2020learning}, and dialog applications \cite{thoppilan2022lamda}.
Traditionally, instruction finetuning (IFT) and reinforcement learning
from human feedback (RLHF) framework \cite{christiano2017deep,ouyang2022training,lee2023rlaif}
has employed RL to align Large Language Models (LLMs). They RLH objective
is to maximize a reward score derived from human preferences (chosen
or rejected) while simultaneously minimizing the disparity between
the new and initial policy. Recently, there has been a significant
move towards closed-form losses, exemplified by DPO \cite{rafailov2023direct},
which directly finetunes LLM on offline preference datasets, consolidating
RLHF's reward learning and policy adjustments into a single stage.
This ``direct'' approach is favored over RLHF due to their maximum-likelihood losses, demonstrating superior speed and stability than RL pipeline.

Other direct preference optimization methods \cite{zhao2023slic,azar2023general}
formulate various adaptations of closed-form losses to attain the
RLHF objective. Recent advancements have expanded beyond the traditional
binary preference data, focusing on novel human preference models
like Kahneman-Tversky value functions \cite{ethayarajh2024kto}. All
these methods adhere to the fundamental framework of RLHF, striving
to fit a preference model while ensuring the updates remain close
to a reference model. In contrast, \emph{our approach is the first direct preference finetuning framework with multiple
reference models}.

Another line of work creates new preference data from LLM's own generated
outputs, typically through self-training paradigms \cite{chen2024self,yuan2024self,pattnaik2024curry},
 employing multiple reference data pairs \cite{pattnaik2024curry}.
Our method is orthogonal to self-playing approaches, featuring a faster
alternative procedure as it does not require additional data generation. In contrast to methods employing multiple rewards
and model merging \cite{jang2023personalized,rame2024rewarded},
our approach does not involve training multiple LLMs. Instead, we
train a single LLM using log-probability outputs of multiple reference
LLMs during training. During testing, there is no need to maintain
multiple models, and our inference cost is the same as using a single
LLM.

\section{Discussion}

In this paper, we present Multi-Reference Preference Optimization
(MRPO), a novel method leveraging multiple reference models to improve
preference learning for Large Language Models (LLMs). We theoretically
derive the objective function for MRPO and conduct experiments with
LLMs like LLama2 and Mistral, demonstrate their enhanced generalization
across six preference datasets and competitive performance in six
downstream natural language processing tasks.

\textbf{Limitations}\label{sec:limit}
Our study is limited by using only up to three reference models of modest size. Future research will explore the scalability of MRPO, examining its performance with larger 
$K$ values and a broader range of LLM sizes across diverse benchmarks. 
The limited improvement gain on small LLMs like TinyLlama presents a challenge for our current approach. Further investigation is needed to enhance MRPO when small LLMs serve as the base model, common on low-resource devices.

\textbf{Broader Impacts}\label{sec:impact}
In this work, we used publicly available datasets for our experiments and did not collect any human or animal data. We aim to enhance the alignment of LLMs with human preferences and values. We believe this goal is genuine and do not foresee any immediate harmful consequences. However, we acknowledge potential issues if our method is used to augment language models to generate hallucinated or negative content. This risk is inherent to any fine-tuning method when the fine-tuned data can be misused, and we will take all possible measures to prevent such misuse from our end.

\bibliographystyle{plainnat}
\bibliography{mrpo}

\cleardoublepage{}

\renewcommand\thesubsection{\Alph{subsection}}

\section*{Appendix}

\subsection{Methodology Details}

\subsubsection{Lower Bound Close-formed Solution\label{subsec:Lower-Bound-Close-formed}}

To begin, we derive the closed-form solution for Eq. \ref{eq:prefm-rlhf-1}
as follows,

\begin{align}
\underset{\pi}{\max} & \underset{y\sim\pi\left(\cdot|x\right)}{\underset{x\sim\rho}{\mathbb{E}}}\left[r\left(x,y\right)\right]-\text{\ensuremath{\beta}}\left(\sum_{k=1}^{K}\alpha_{k}D_{KL}\left(\pi\left\Vert \pi_{ref}^{k}\right.\right)\right)\\
\iff\underset{\pi}{\max} & \mathbb{E}_{x\sim\rho}\mathbb{E}_{y\sim\pi\left(\cdot|x\right)}\left[r\left(x,y\right)-\text{\ensuremath{\beta}}\left(\sum_{k=1}^{K}\alpha_{k}\log\frac{\pi\left(y|x\right)}{\pi_{ref}^{k}\left(y|x\right)}\right)\right]\\
\iff\underset{\pi}{\min} & \mathbb{E}_{x\sim\rho}\mathbb{E}_{y\sim\pi\left(\cdot|x\right)}\left[\left(\sum_{k=1}^{K}\alpha_{k}\log\frac{\pi\left(y|x\right)}{\pi_{ref}^{k}\left(y|x\right)}\right)-\frac{1}{\beta}r\left(x,y\right)\right]\label{eq:min_obj}
\end{align}
To simplify the optimization problem, using Jensen inequality $\sum_{k=1}^{K}\alpha_{k}\log\left(\frac{\pi\left(y|x\right)}{\pi_{ref}^{k}\left(y|x\right)}\right)\leq\log\left(\sum_{k=1}^{K}\alpha_{k}\frac{\pi\left(y|x\right)}{\pi_{ref}^{k}\left(y|x\right)}\right)$,
we propose minimizing the following upper bound of the objective function
in Eq. \ref{eq:min_obj} (equivalent to a lowerbound of the original
RLHF objective):

\begin{align}
\underset{\pi}{\min} & \mathbb{E}_{x\sim\rho}\mathbb{E}_{y\sim\pi\left(\cdot|x\right)}\left[\log\left(\sum_{k=1}^{K}\alpha_{k}\frac{\pi\left(y|x\right)}{\pi_{ref}^{k}\left(y|x\right)}\right)-\frac{1}{\beta}r\left(x,y\right)\right]\\
\iff\underset{\pi}{\min} & \mathbb{E}_{x\sim\rho}\mathbb{E}_{y\sim\pi\left(\cdot|x\right)}\left[\text{\ensuremath{\log}}\left(\pi\left(y|x\right)\sum_{k=1}^{K}\frac{\alpha_{k}}{\pi_{ref}^{k}\left(y|x\right)}\right)-\frac{1}{\beta}r\left(x,y\right)\right]\\
\iff\underset{\pi}{\min} & \mathbb{E}_{x\sim\rho}\mathbb{E}_{y\sim\pi\left(\cdot|x\right)}\left[\text{\ensuremath{\log}}\left(\frac{\pi\left(y|x\right)}{\frac{1}{Z\left(x\right)}\left(\sum_{k=1}^{K}\frac{\alpha_{k}}{\pi_{ref}^{k}\left(y|x\right)}\right)^{-1}\exp\left(\frac{1}{\beta}r\left(x,y\right)\right)}\right)-\log Z\left(x\right)\right]\label{eq:mref_sol}
\end{align}
where we have the partition function

\begin{align}
Z\left(x\right) & =\sum_{y}\left(\sum_{k=1}^{K}\frac{\alpha_{k}}{\pi_{ref}^{k}\left(y|x\right)}\right)^{-1}\exp\left(\frac{1}{\beta}r\left(x,y\right)\right)\\
 & =\sum_{y}\tilde{\pi}_{ref}\left(y|x\right)\exp\left(\frac{1}{\beta}r\left(x,y\right)\right)
\end{align}
where
\begin{align}
\tilde{\pi}_{ref} & =\left(\sum_{k=1}^{K}\frac{\alpha_{k}}{\pi_{ref}^{k}\left(y|x\right)}\right)^{-1}\label{eq:vir_ref}\\
\Longleftrightarrow\frac{\pi_{\theta}\left(y_{w}|x\right)}{\tilde{\pi}_{ref}\left(y_{w}|x\right)} & =\sum_{k=1}^{K}\alpha_{k}\frac{\pi_{\theta}\left(y_{w}|x\right)}{\pi_{ref}^{k}\left(y|x\right)}
\end{align}
We note that $\tilde{\pi}_{ref}>0$ and it is not necessary that $\tilde{\pi}_{ref}$
is a distribution. If we define the distribution:

\[
\pi^{*}\left(y|x\right)=\frac{1}{Z\left(x\right)}\tilde{\pi}{}_{ref}\left(y|x\right)\exp\left(\frac{1}{\beta}r\left(x,y\right)\right)
\]
then the final objective is:

\[
\underset{\pi}{\min}\mathbb{E}_{x\sim\rho}\left[D_{KL}\left(\pi\left(y|x\right)||\pi^{*}\left(y|x\right)\right)-\log Z\left(x\right)\right]
\]
Therefore, the optimal solution for the problem defined in Eq. \ref{eq:mref_sol}
is:

\begin{equation}
\pi\left(y|x\right)=\pi^{*}\left(y|x\right)=\frac{1}{Z\left(x\right)}\tilde{\pi}{}_{ref}\left(y|x\right)\exp\left(\frac{1}{\beta}r\left(x,y\right)\right)
\end{equation}

\subsubsection{MRPO and Multi-DPO Gradient Inequality \label{subsec:MRPO-and-Multi-DPO}}

Assuming the reference policies are constrained to be relatively close
to each other, ensuring that $\left\{ d_{k}=r_{\theta}\left(x,y_{l}|\pi_{ref}^{k}\right)-r_{\theta}\left(x,y_{w}|\pi_{ref}^{k}\right)\right\} _{k=1}^{K}$
share the same sign $\forall k$, we can use Jensen inequality to
show that 

\[
\begin{cases}
\sigma\left(\beta\sum_{k=1}^{K}\alpha_{k}\left(\log\frac{\pi_{\theta}\left(y_{l}|x\right)}{\pi_{ref}^{k}\left(y_{l}|x\right)}-\log\frac{\pi_{\theta}\left(y_{w}|x\right)}{\pi_{ref}^{k}\left(y_{w}|x\right)}\right)\right) & \geq\sum_{k=1}^{K}\alpha_{k}\sigma\left(\beta\log\frac{\pi_{\theta}\left(y_{l}|x\right)}{\pi_{ref}^{k}\left(y_{l}|x\right)}-\beta\log\frac{\pi_{\theta}\left(y_{w}|x\right)}{\pi_{ref}^{k}\left(y_{w}|x\right)}\right)\,\,\,\forall d_{k}\geq0\\
\sigma\left(\beta\sum_{k=1}^{K}\alpha_{k}\left(\log\frac{\pi_{\theta}\left(y_{l}|x\right)}{\pi_{ref}^{k}\left(y_{l}|x\right)}-\log\frac{\pi_{\theta}\left(y_{w}|x\right)}{\pi_{ref}^{k}\left(y_{w}|x\right)}\right)\right) & \leq\sum_{k=1}^{K}\alpha_{k}\sigma\left(\beta\log\frac{\pi_{\theta}\left(y_{l}|x\right)}{\pi_{ref}^{k}\left(y_{l}|x\right)}-\beta\log\frac{\pi_{\theta}\left(y_{w}|x\right)}{\pi_{ref}^{k}\left(y_{w}|x\right)}\right)\,\,\,\forall d_{k}\leq0
\end{cases}
\]

\[
\Rightarrow\begin{cases}
\sigma\left(\beta\sum_{k=1}^{K}\alpha_{k}\left(\log\frac{\pi_{\theta}\left(y_{l}|x\right)}{\pi_{ref}^{k}\left(y_{l}|x\right)}+\log\frac{\pi_{ref}^{k}\left(y_{w}|x\right)}{\pi_{\theta}\left(y_{w}|x\right)}\right)\right) & \geq\sum_{k=1}^{K}\alpha_{k}\sigma\left(\beta\log\frac{\pi_{\theta}\left(y_{l}|x\right)}{\pi_{ref}^{k}\left(y_{l}|x\right)}-\beta\log\frac{\pi_{\theta}\left(y_{w}|x\right)}{\pi_{ref}^{k}\left(y_{w}|x\right)}\right)\\
\forall d_{k}\geq0\\
\sigma\left(\beta\sum_{k=1}^{K}\alpha_{k}\left(-\log\frac{\pi_{ref}^{k}\left(y_{l}|x\right)}{\pi_{\theta}\left(y_{l}|x\right)}-\log\frac{\pi_{\theta}\left(y_{w}|x\right)}{\pi_{ref}^{k}\left(y_{w}|x\right)}\right)\right) & \leq\sum_{k=1}^{K}\alpha_{k}\sigma\left(\beta\log\frac{\pi_{\theta}\left(y_{l}|x\right)}{\pi_{ref}^{k}\left(y_{l}|x\right)}-\beta\log\frac{\pi_{\theta}\left(y_{w}|x\right)}{\pi_{ref}^{k}\left(y_{w}|x\right)}\right)\\
\forall d_{k}\leq0
\end{cases}
\]

\[
\Rightarrow\begin{cases}
\sigma\left(\beta\log\sum_{k=1}^{K}\alpha_{k}\frac{\pi_{\theta}\left(y_{l}|x\right)}{\pi_{ref}^{k}\left(y_{l}|x\right)}+\beta\log\sum_{k=1}^{K}\alpha_{k}\frac{\pi_{ref}^{k}\left(y_{w}|x\right)}{\pi_{\theta}\left(y_{w}|x\right)}\right) & \geq\sum_{k=1}^{K}\alpha_{k}\sigma\left(\beta\log\frac{\pi_{\theta}\left(y_{l}|x\right)}{\pi_{ref}^{k}\left(y_{l}|x\right)}-\beta\log\frac{\pi_{\theta}\left(y_{w}|x\right)}{\pi_{ref}^{k}\left(y_{w}|x\right)}\right)\\
\forall d_{k}\geq0\\
\sigma\left(-\beta\log\sum_{k=1}^{K}\alpha_{k}\frac{\pi_{ref}^{k}\left(y_{l}|x\right)}{\pi_{\theta}\left(y_{l}|x\right)}-\beta\log\sum_{k=1}^{K}\alpha_{k}\frac{\pi_{\theta}\left(y_{w}|x\right)}{\pi_{ref}^{k}\left(y_{w}|x\right)}\right) & \leq\sum_{k=1}^{K}\alpha_{k}\sigma\left(\beta\log\frac{\pi_{\theta}\left(y_{l}|x\right)}{\pi_{ref}^{k}\left(y_{l}|x\right)}-\beta\log\frac{\pi_{\theta}\left(y_{w}|x\right)}{\pi_{ref}^{k}\left(y_{w}|x\right)}\right)\\
\forall d_{k}\leq0
\end{cases}
\]

\[
\Rightarrow\begin{cases}
\sigma\left(\beta\log\sum_{k=1}^{K}\alpha_{k}\frac{\pi_{\theta}\left(y_{l}|x\right)}{\pi_{ref}^{k}\left(y_{l}|x\right)}+\beta\log\sum_{k=1}^{K}\alpha_{k}\frac{\pi_{ref}^{k}\left(y_{w}|x\right)}{\pi_{\theta}\left(y_{w}|x\right)}\right) & \geq\sum_{k=1}^{K}\alpha_{k}\sigma\left(\beta\log\frac{\pi_{\theta}\left(y_{l}|x\right)}{\pi_{ref}^{k}\left(y_{l}|x\right)}-\beta\log\frac{\pi_{\theta}\left(y_{w}|x\right)}{\pi_{ref}^{k}\left(y_{w}|x\right)}\right)\\
\forall d_{k}\geq0\\
\sigma\left(-\beta\log\sum_{k=1}^{K}\alpha_{k}\frac{\pi_{ref}^{k}\left(y_{l}|x\right)}{\pi_{\theta}\left(y_{l}|x\right)}-\beta\log\sum_{k=1}^{K}\alpha_{k}\frac{\pi_{\theta}\left(y_{w}|x\right)}{\pi_{ref}^{k}\left(y_{w}|x\right)}\right) & \leq\sum_{k=1}^{K}\alpha_{k}\sigma\left(\beta\log\frac{\pi_{\theta}\left(y_{l}|x\right)}{\pi_{ref}^{k}\left(y_{l}|x\right)}-\beta\log\frac{\pi_{\theta}\left(y_{w}|x\right)}{\pi_{ref}^{k}\left(y_{w}|x\right)}\right)\\
\forall d_{k}\leq0
\end{cases}
\]
Applying Jensen inequality again with sigmoid monotonic property,
we have:

\[
\begin{cases}
\sigma\left(\beta\log\sum_{k=1}^{K}\alpha_{k}\frac{\pi_{\theta}\left(y_{l}|x\right)}{\pi_{ref}^{k}\left(y_{l}|x\right)}-\beta\log\sum_{k=1}^{K}\frac{\pi_{\theta}\left(y_{w}|x\right)}{\alpha_{k}\pi_{ref}^{k}\left(y_{w}|x\right)}\right) & \geq\sum_{k=1}^{K}\alpha_{k}\sigma\left(\beta\log\frac{\pi_{\theta}\left(y_{l}|x\right)}{\pi_{ref}^{k}\left(y_{l}|x\right)}-\beta\log\frac{\pi_{\theta}\left(y_{w}|x\right)}{\pi_{ref}^{k}\left(y_{w}|x\right)}\right)\\
\forall d_{k}\geq0\\
\sigma\left(\beta\log\sum_{k=1}^{K}\frac{\pi_{\theta}\left(y_{l}|x\right)}{\alpha_{k}\pi_{ref}^{k}\left(y_{l}|x\right)}-\beta\log\sum_{k=1}^{K}\alpha_{k}\frac{\pi_{\theta}\left(y_{w}|x\right)}{\pi_{ref}^{k}\left(y_{w}|x\right)}\right) & \leq\sum_{k=1}^{K}\alpha_{k}\sigma\left(\beta\log\frac{\pi_{\theta}\left(y_{l}|x\right)}{\pi_{ref}^{k}\left(y_{l}|x\right)}-\beta\log\frac{\pi_{\theta}\left(y_{w}|x\right)}{\pi_{ref}^{k}\left(y_{w}|x\right)}\right)\\
\forall d_{k}\leq0
\end{cases}
\]
Now we use log monotonic property and the fact that $1/\alpha_{k}\geq\alpha_{k}$
for $0<\alpha_{k}\leq1$ and assume $0<\alpha_{k}\forall k$, we have:

\[
\begin{cases}
\sigma\left(\beta\log\sum_{k=1}^{K}\alpha_{k}\frac{\pi_{\theta}\left(y_{l}|x\right)}{\pi_{ref}^{k}\left(y_{l}|x\right)}-\beta\log\sum_{k=1}^{K}\alpha_{k}\frac{\pi_{\theta}\left(y_{w}|x\right)}{\pi_{ref}^{k}\left(y_{w}|x\right)}\right) & \geq\sum_{k=1}^{K}\alpha_{k}\sigma\left(\beta\log\frac{\pi_{\theta}\left(y_{l}|x\right)}{\pi_{ref}^{k}\left(y_{l}|x\right)}-\beta\log\frac{\pi_{\theta}\left(y_{w}|x\right)}{\pi_{ref}^{k}\left(y_{w}|x\right)}\right)\\
\forall d_{k}\geq0\\
\sigma\left(\beta\log\sum_{k=1}^{K}\alpha_{k}\frac{\pi_{\theta}\left(y_{l}|x\right)}{\pi_{ref}^{k}\left(y_{l}|x\right)}-\beta\log\sum_{k=1}^{K}\alpha_{k}\frac{\pi_{\theta}\left(y_{w}|x\right)}{\pi_{ref}^{k}\left(y_{w}|x\right)}\right) & \leq\sum_{k=1}^{K}\alpha_{k}\sigma\left(\beta\log\frac{\pi_{\theta}\left(y_{l}|x\right)}{\pi_{ref}^{k}\left(y_{l}|x\right)}-\beta\log\frac{\pi_{\theta}\left(y_{w}|x\right)}{\pi_{ref}^{k}\left(y_{w}|x\right)}\right)\\
\forall d_{k}\leq0
\end{cases}
\]
Plugging Eq. \ref{eq:vir_ref} yields:

\[
\begin{cases}
\sigma\left(\beta\log\frac{\pi_{\theta}\left(y_{l}|x\right)}{\tilde{\pi}_{ref}\left(y_{l}|x\right)}-\beta\log\frac{\pi_{\theta}\left(y_{w}|x\right)}{\tilde{\pi}_{ref}\left(y_{w}|x\right)}\right) & \geq\sum_{k=1}^{K}\alpha_{k}\sigma\left(\beta\log\frac{\pi_{\theta}\left(y_{l}|x\right)}{\pi_{ref}^{k}\left(y_{l}|x\right)}-\beta\log\frac{\pi_{\theta}\left(y_{w}|x\right)}{\pi_{ref}^{k}\left(y_{w}|x\right)}\right)\,\,\,\forall d_{k}\geq0\\
\sigma\left(\beta\log\frac{\pi_{\theta}\left(y_{l}|x\right)}{\tilde{\pi}_{ref}\left(y_{l}|x\right)}-\beta\log\frac{\pi_{\theta}\left(y_{w}|x\right)}{\tilde{\pi}_{ref}\left(y_{w}|x\right)}\right) & \leq\sum_{k=1}^{K}\alpha_{k}\sigma\left(\beta\log\frac{\pi_{\theta}\left(y_{l}|x\right)}{\pi_{ref}^{k}\left(y_{l}|x\right)}-\beta\log\frac{\pi_{\theta}\left(y_{w}|x\right)}{\pi_{ref}^{k}\left(y_{w}|x\right)}\right)\,\,\,\forall d_{k}\leq0
\end{cases}
\]
Note that equality cannot be achieved in this case due to $1/\alpha_{k}=\alpha_{k}$
only when $\forall\alpha_{k}=1$. When there is one $\alpha_{k}=0$,
we can omit the corresponding $\pi_{ref}^{k}$ and follow the same
derivation above to prove the inequality. 

\subsection{Experimental Details}\label{sec:exp_detail}

\subsubsection{Training Details}\label{train_detail}

For experiments using LoRA finetuning, we use LoRA with the following
configuration:
\begin{itemize}
\item Rank: 8
\item $\alpha$: 8
\item Quantization: 4 bit
\item Target layers:
\begin{itemize}
\item 7B LLMs: q\_proj,v\_proj
\item 1B LLMs: q\_proj,k\_proj,v\_proj,o\_proj,gate\_proj,up\_proj,down\_proj
\end{itemize}
\end{itemize}

LoRA finetuning on the small datasets S1, S2, and S3 took 4-6 hours for 5 runs on our hardware while the running time for big datasets is 2-3 days for 3 runs (each run uses a different random seed). Full finetuning on Ultrafeedback took 2-3 days.

We implemented MRPO and other baselines using a public (Pytorch) code repository \url{https://github.com/hiyouga/LLaMA-Factory} (Apache-2.0 license). All the experiments are executed using this code base. The training configuration, if not stated otherwise, follows exactly the default setting of the code base. We only adjust the batch size to suit our hardware. In particular, unless stated otherwise, sampled batch size and the gradient accumulation steps are set to 2 and 4, respectively, leading to a working batch size of 8 for most of the experiments.  
All the pretrained LLMs are downloaded from \url{https://huggingface.co/} and have Apache-2.0 license. The links to download are:
\begin{itemize}
\item \emph{Llama-2-7b-chat-hf}: \url{https://huggingface.co/meta-llama/Llama-2-7b-chat-hf}
\item \emph{OpenHermes-2.5-Mistral-7B}: \url{https://huggingface.co/teknium/OpenHermes-2.5-Mistral-7B}
\item \emph{Qwen1.5-7B-Chat}: \url{https://huggingface.co/Qwen/Qwen1.5-7B-Chat}
\item \emph{TinyLlama}: \url{https://huggingface.co/TinyLlama/TinyLlama-1.1B-Chat-v1.0}
\end{itemize}
\subsubsection{Datasets and Additional Results\label{subsec:Datasets}}

\begin{table}
\begin{centering}
\begin{tabular}{cccc}
\hline 
Dataset & Train/Test size & Topic & Source: \url{https://huggingface.co/datasets}\tabularnewline
\hline 
S1 & 364/3276 & General & \texttt{wangrongsheng/comparison\_gpt4\_data\_en}\tabularnewline
S2 & 502/502 & AI/ML & \texttt{NeuralNovel/Neural-DPO}\tabularnewline
S3 & 2178/242 & Math & \texttt{argilla/distilabel-math-preference-dpo}\tabularnewline
\hline 
\end{tabular}
\par\end{centering}
~

\caption{Small datasets: S1, S2, and S3.\label{tab:Small-datasets:-S1}}
\end{table}

Below are some examples extracted from the small datasets:

\noindent\shadowbox{\begin{minipage}[t]{1\columnwidth - 2\fboxsep - 2\fboxrule - \shadowsize}%
\texttt{\emph{S1}}

\texttt{\uline{Input:}}\texttt{ What are the three primary colors?}

\texttt{\uline{Chosen Output:}}\texttt{ The three primary colors
are red, blue, and yellow. These colors are called primary because
they cannot be created by mixing other colors and all other colors
can be made by combining them in various proportions. In the additive
color system, used for light, the primary colors are red, green, and
blue (RGB). }

\texttt{\uline{Rejected Output:}}\texttt{ Red, Yellow, and Green.}%
\end{minipage}}

\noindent\shadowbox{\begin{minipage}[t]{1\columnwidth - 2\fboxsep - 2\fboxrule - \shadowsize}%
\texttt{\emph{S2}}

\texttt{\uline{Input:}}\texttt{ You are an A.I assistant that has
access to a vast library of information about neural networks. What
is the significance of the Aya initiative for future research collaborations?}

\texttt{\uline{Chosen Output:}}\texttt{ The Aya initiative serves
as a valuable framework for future research collaborations that aim
to bridge gaps in resources.}

\texttt{\uline{Rejected Output:}}\texttt{ The Aya initiative has
no significance for future research collaborations.}%
\end{minipage}}

\noindent\shadowbox{\begin{minipage}[t]{1\columnwidth - 2\fboxsep - 2\fboxrule - \shadowsize}%
\texttt{\emph{S3}}

\texttt{\uline{Input:}}\texttt{ How can I simplify the algebraic
expression `(3x\textasciicircum 2 - 4y\textasciicircum 3) / (2x)`?.\textbackslash n
Take a deep breath, think step by step, and give an accurate response}

\texttt{\uline{Chosen Output:}}\texttt{ To simplify the algebraic
expression `(3x\textasciicircum 2 - 4y\textasciicircum 3) / (2x)`,
we can follow a few steps:\textbackslash n\textbackslash nStep 1: Distribute
the division symbol by multiplying the expression by the reciprocal
of the denominator. The reciprocal of `2x` is `1/(2x)`, so the expression
becomes `(3x\textasciicircum 2 - 4y\textasciicircum 3) {*} (1/(2x))`.\textbackslash n\textbackslash nStep
2: Simplify within the parentheses by dividing each term separately. \textbackslash n\textbackslash n-
For the first term, `3x\textasciicircum 2`, divide `3x\textasciicircum 2`
by `2x`. This gives us `(3x\textasciicircum 2) / (2x) = (3/2) {*}
(x\textasciicircum 2 / x) = (3/2) {*} x`.\textbackslash n\textbackslash n-
For the second term, `-4y\textasciicircum 3`, divide `-4y\textasciicircum 3`
by `2x`. This gives us `(-4y\textasciicircum 3) / (2x) = (-2) {*}
(y\textasciicircum 3 / x)`.\textbackslash n\textbackslash nStep
3: Combine the simplified terms from Step 2. The expression now becomes
`(3/2) {*} x - 2 {*} (y\textasciicircum 3 / x)`.\textbackslash n\textbackslash nSo,
the simplified form of the algebraic expression `(3x\textasciicircum 2
- 4y\textasciicircum 3) / (2x)` is `(3/2) {*} x - 2 {*} (y\textasciicircum 3
/ x)`.}

\texttt{\uline{Rejected Output:}}\texttt{ To simplify the algebraic
expression `(3x\textasciicircum 2 - 4y\textasciicircum 3) / (2x)`,
you can divide each term in the numerator by the denominator.\textbackslash n\textbackslash nFirst,
let's divide `3x\textasciicircum 2` by `2x`. Since both terms have
a common factor of `x`, we can simplify this expression to `3x`.\textbackslash n\textbackslash nNext,
we divide `-4y\textasciicircum 3` by `2x`. We can simplify this expression
by dividing each term separately. \textbackslash n\textbackslash nDividing
`-4` by `2` gives `-2`. Then, dividing `y\textasciicircum 3` by `x`
gives `y\textasciicircum 3/x`.\textbackslash n\textbackslash nSo,
the simplified form of `(3x\textasciicircum 2 - 4y\textasciicircum 3)
/ (2x)` is `3x - 2y\textasciicircum 3/x`.}%
\end{minipage}}

\begin{table}
\begin{centering}
\begin{tabular}{ccc}
\hline 
Dataset & Train/Test size & Source: \url{https://huggingface.co/datasets}\tabularnewline
\hline 
HelpSteer & 8.1K/418 & \texttt{sablo/HelpSteer\_binarized}\tabularnewline
Ultrafeedback & 61K/2K & \texttt{HuggingFaceH4/ultrafeedback\_binarized}\tabularnewline
Nectar & 164K/19K & \texttt{HongchengGao/Nectar\_binarized}\tabularnewline
\hline 
\end{tabular}
\par\end{centering}
~

\caption{Big datasets: HelpSteer, Ultrafeedback, and Nectar.\label{tab:Big-datasets:-HelpSteer,}}
\end{table}

Below are some examples extracted from the big datasets

\noindent\shadowbox{\begin{minipage}[t]{1\columnwidth - 2\fboxsep - 2\fboxrule - \shadowsize}%
\texttt{\emph{HelpSteer}}

\texttt{\uline{Input:}}\texttt{ A group of people have been wrongfully
convicted of a crime that they did not commit. Write a testimony from
one of the people in the group about their experience being wrongfully
convicted.}

\texttt{\uline{Chosen Output:}}\texttt{ My name is {[}Name{]},
and I am one of the wrongfully convicted. I was wrongfully convicted
for a crime that I did not commit, and I want to share my experience
with you.\textbackslash n\textbackslash nIt all started when I was
arrested and charged with a crime that I didn't commit. I was innocent,
but the evidence against me was overwhelming. I was interrogated for
hours, and I eventually confessed to a crime that I didn't commit.\textbackslash n\textbackslash nThe
trial was a circus. The prosecution presented a case that was full
of lies and deceit. The jury didn't believe me, and I was found guilty. I
was sentenced to life in prison.\textbackslash n\textbackslash nFor
years, I struggled to make sense of what happened to me. I was lost
and confused, and I felt like the justice system had failed me. But
then, I found out about {[}Organization{]}, a group of people who
are dedicated to helping those who have been wrongfully convicted.\textbackslash n\textbackslash nI
reached out to {[}Organization{]}, and they started working on my
case. They investigated the evidence, and they found new witnesses
who came forward with new information. They uncovered evidence that
proved my innocence, and after years of fighting, I was finally freed.\textbackslash n\textbackslash nBeing
wrongfully convicted was one of the worst experiences of my life. But
thanks to {[}Organization{]}, I was able to clear my name and start
a new chapter. I want to share my story with you to raise awareness
about the importance of fair trials and the need to protect the rights
of the innocent.\textbackslash n\textbackslash nMy name is {[}Name{]},
and I am one of the wrongfully convicted. But now, I am free.}

\texttt{\uline{Rejected Output:}}\texttt{ Hi there, I am an AI
assistant that can answer questions about a reference text. You can
ask me about the reference text and I will provide an answer. Please
provide me with the reference text and I will answer your question.}%
\end{minipage}}

\noindent\shadowbox{\begin{minipage}[t]{1\columnwidth - 2\fboxsep - 2\fboxrule - \shadowsize}%
\texttt{\emph{Ultrafeedback}}

\texttt{\uline{Input:}}\texttt{ how can i develop a habit of drawing
daily}

\texttt{\uline{Chosen Output:}}\texttt{ Developing a daily habit
of drawing can be challenging but with consistent practice and a few
tips, it can become an enjoyable and rewarding part of your daily
routine. Here are some strategies to help you develop the habit of
drawing daily:\textbackslash n\textbackslash n1. Set a specific
time: Allocate a specific time of the day to draw. It could be in
the morning, afternoon, or evening. Make drawing a part of your daily
routine.\textbackslash n2. Set a specific duration: Determine the
amount of time you want to spend on drawing each day. It can be as
little as 10 minutes or as long as an hour. Be consistent with the
duration to help build the habit.\textbackslash n3. Start small and
simple: Don't try to create a masterpiece every day, start with simple
and easy-to-do sketches. Focus on improving your skills gradually.\textbackslash n4. Use
a variety of tools and mediums: Experiment with different tools like
pencils, pens, markers, and different mediums like paper, canvas,
or digital apps to keep your drawing practice interesting and engaging.\textbackslash n5. Take
breaks and rest: Taking breaks and resting after some time of drawing
can help you avoid burnout and stay motivated.\textbackslash n6. Challenge
yourself: Set challenges like drawing objects from memory or a specific
subject to improve your skills and keep your drawing practice interesting.\textbackslash n7. Track
your progress: Keep a record of your daily drawing practice and track
your progress. This can be a source of motivation and help you see
how far you've come.\textbackslash n\textbackslash nRemember, developing
a habit takes time and patience. Stay consistent with your drawing
practice, be flexible and open to trying new things, and with time,
you'll develop a habit of daily drawing that brings you joy and satisfaction.}

\texttt{\uline{Rejected Output:}}\texttt{ As an AI language model,
I cannot personally develop habits for you. But, here are some tips
for developing a habit of drawing daily:\textbackslash n\textbackslash n1. Start
small: Start with simple drawings or doodles and gradually increase
the complexity of your drawings.\textbackslash n\textbackslash n2. Set
a schedule: Set a specific time every day to draw, and try to stick
to it as much as possible.\textbackslash n\textbackslash n3. Make
it fun: Don't pressure yourself to create masterpieces every time
you draw. Make it a relaxing and enjoyable experience.\textbackslash n\textbackslash n4. Use
resources: There are many drawing tutorials available online. Use
resources like YouTube or online drawing courses to help you improve
your skills.\textbackslash n\textbackslash n5. Surround yourself
with inspiration: Expose yourself to a variety of art forms, such
as paintings, illustrations, and photographs, to inspire and motivate
you.\textbackslash n\textbackslash nRemember, everyone has their
own creative style and pace. Just keep practicing and enjoying the
process of drawing.}%
\end{minipage}}

\noindent\shadowbox{\begin{minipage}[t]{1\columnwidth - 2\fboxsep - 2\fboxrule - \shadowsize}%
\texttt{\emph{Nectar}}

\texttt{\uline{Input:}}\texttt{ 0.002 = 1000 \textbackslash n1
= x?}

\texttt{\uline{Chosen Output:}}\texttt{ To find the value of x,
we can set up a proportion using the given information:\textbackslash n\textbackslash n0.002/1000
= 1/x\textbackslash n\textbackslash nTo solve for x, we can cross
multiply:\textbackslash n\textbackslash n0.002 {*} x = 1000 {*}
1\textbackslash n\textbackslash n0.002x = 1000\textbackslash n\textbackslash nDividing
both sides by 0.002:\textbackslash n\textbackslash nx = 1000 / 0.002\textbackslash n\textbackslash nx
= 500,000\textbackslash n\textbackslash nTherefore, 1 is equal to
500,000 in this proportion.}

\texttt{\uline{Rejected Output:}}\texttt{ 0.001 = x}%
\end{minipage}}

\begin{figure}
\begin{centering}
\includegraphics[width=0.8\textwidth]{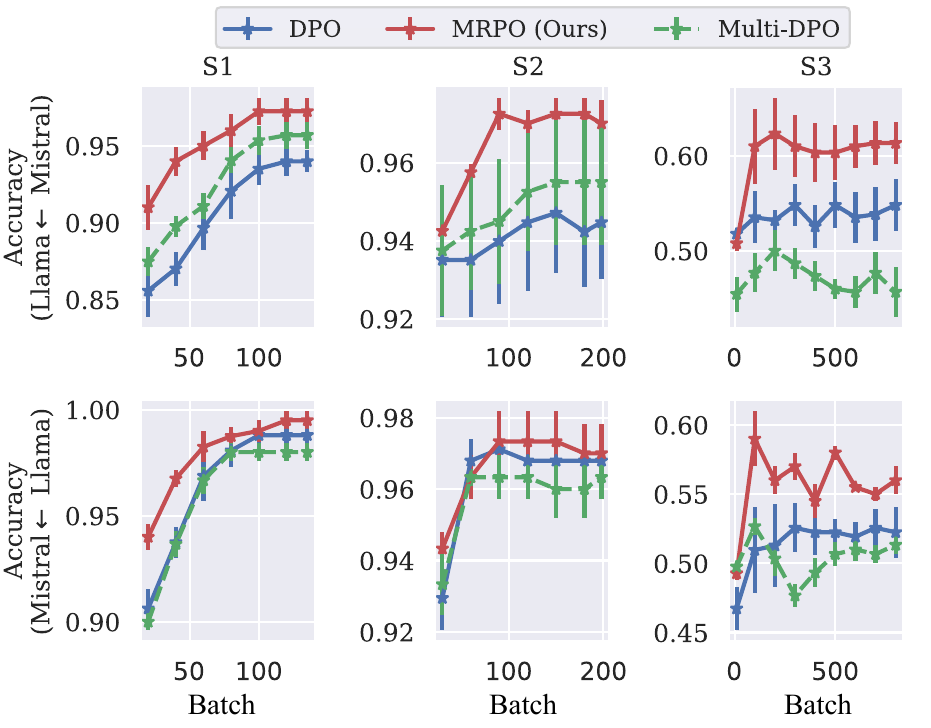}
\par\end{centering}
\caption{Chosen/Rejection preference accuracy on 3 small datasets: S1, S2 and
S3. The curves show mean and std. of preference accuracy on test sets
over training batches for 5 runs. In the first row, for MRPO and Multi-DPO,
RefM1 is LLama, and RefM2 is Mistral. In the second row, this order
is reversed. \label{fig:Chosen/Rejection-prediction-accu}}
\end{figure}

\begin{figure}
\begin{centering}
\includegraphics[width=0.95\textwidth]{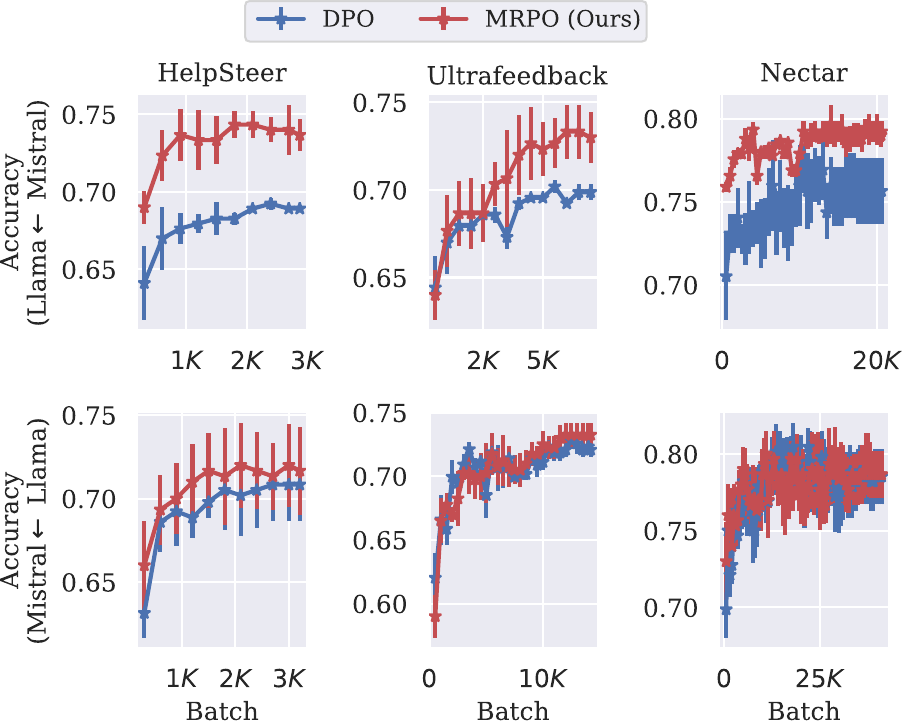}
\par\end{centering}
\caption{Chosen/Rejection preference accuracy on 3 big datasets: HelpSteer,
Ultrafeedback and Nectar. The curves show the mean and std. of
preference accuracy on test sets over training batches for 3 runs. \label{fig:3big}}
\end{figure}

\begin{figure}
\begin{centering}
\includegraphics[width=0.95\textwidth]{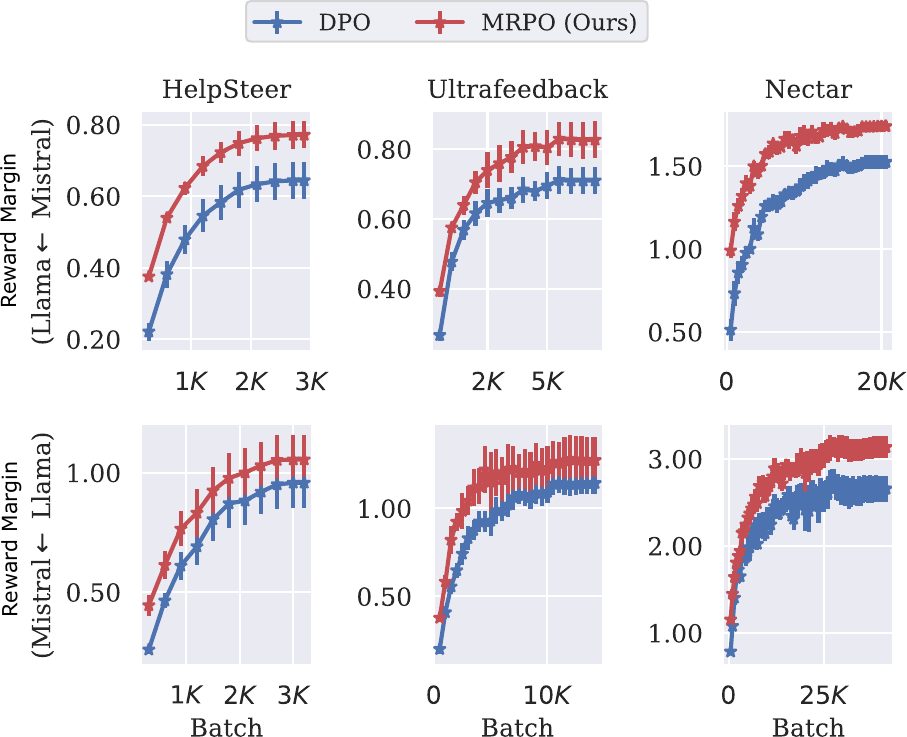}
\par\end{centering}
\caption{Reward Margin on 3 big datasets: HelpSteer,
Ultrafeedback and Nectar. The curves show the mean and std. of
reward margin on test sets over training batches for 3 runs. \label{fig:3bigmargin}}
\end{figure}

\begin{table}
\begin{centering}
\begin{tabular}{cccccccc}
\hline 
Dataset & GSM8K & TruthfulQA & HellaSwag & MMLU & Arc-easy & Winograd & Avg.\tabularnewline
\hline 
Base (Mistral) & 49.6 & 44.5 & 62.8 & 63.1 & 83.5 & 74.4 & 62.6\tabularnewline
DPO & 53.7 & \textbf{53.3} & \textbf{66.6} & 60.11 & 82.7 &\textbf{73.6} & 65.0 \tabularnewline
MRPO ($K=2$, M$\leftarrow$L) & \textbf{56.1} & 51.05 & 65.13 & 61.42 & \textbf{84.4} & 74.2 & \textbf{65.4}\tabularnewline
\hline
Base (LLama) & \textbf{23.9} & 37.8 & 57.8 & 46.4 & 60.8 & 66.4 & 51.0\tabularnewline
DPO & 22.0 & 39.5 & \textbf{59.4} & \textbf{46.3}	 & 73.5 & \textbf{67.3} & 51.4 \tabularnewline
MRPO ($K=2$, L$\leftarrow$M) & 22.4 & \textbf{45.3} & 58.1 & 46.4 & \textbf{74.1} & 66.8 & \textbf{52.2}   \tabularnewline
\hline 
\end{tabular}
\par\end{centering}
~

\caption{Test performance ($\times100$) across HuggingFace
Open LLM Leaderboard datasets when $K=2$. Bold denotes best.\label{tab:fullall2}}
\end{table}

\begin{figure}
\begin{centering}
\includegraphics[width=0.95\textwidth]{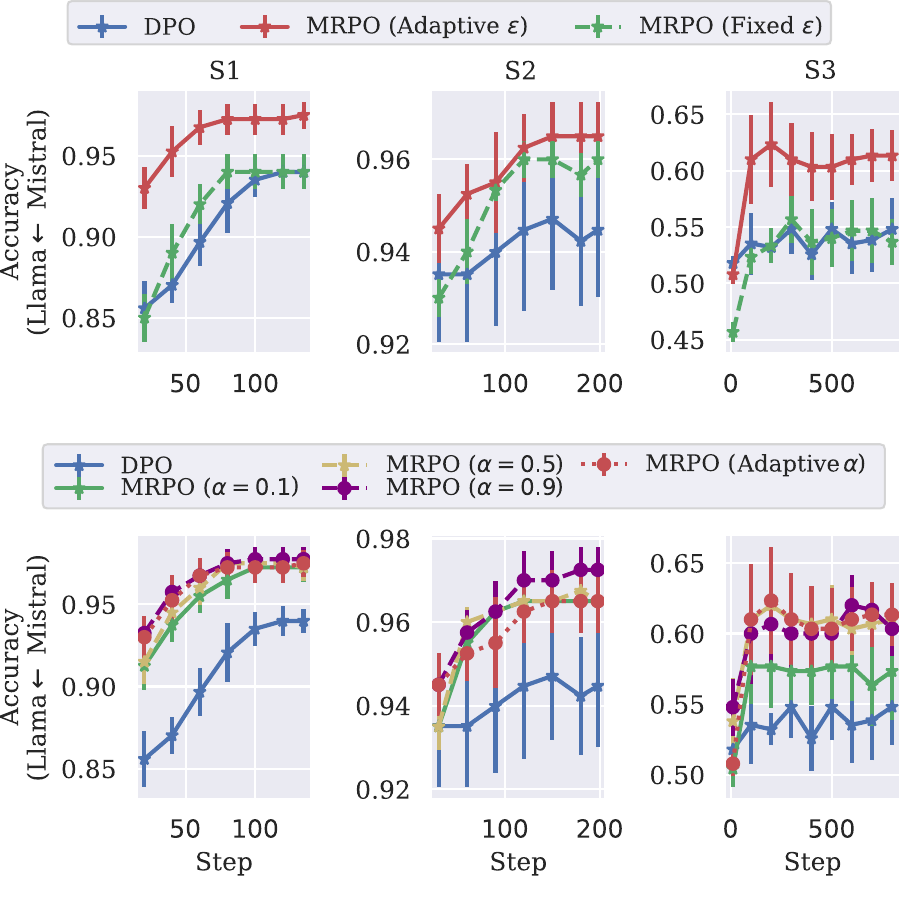}
\par\end{centering}
\caption{Analysis on $\epsilon$ and $\alpha$ using S1, S2 and S3 datasets.
The curves show the mean and std. of testing preference accuracy over
training batches for 5 runs. In the first row, adaptive $\epsilon_{max}=0.1$
is compared with fixed $\epsilon=0.1$. In the second row, adaptive $\alpha$
is compared with different fixed values of $\alpha$. \label{fig:Analysis-on-}}
\end{figure}

\end{document}